\documentclass[sn-mathphys,Numbered]{sn-jnl}

\usepackage{graphicx}%
\usepackage{multirow}%
\usepackage{amsmath,amssymb,amsfonts}%
\usepackage{amsthm}%
\usepackage{mathrsfs}%
\usepackage[title]{appendix}%
\usepackage[dvipsnames]{xcolor}%
\usepackage{textcomp}%
\usepackage{manyfoot,ulem}%
\usepackage{booktabs}%
\usepackage{algorithm}%
\usepackage{algorithmicx}%
\usepackage{algpseudocode}%
\usepackage{listings}
\usepackage{svg}
\usepackage{fullpage}

\raggedbottom


\usepackage{mysty}

\title[]{On the Sample Complexity of Learning for Blind Inverse Problems}

\author*[1]{\fnm{Nathan} \sur{Buskulic}}\email{nathan.buskulic@proton.me}
\author[1,3]{\fnm{Luca} \sur{Calatroni}}\email{luca.calatroni@unige.it}
\author[1,3]{\fnm{Lorenzo} \sur{Rosasco}}\email{lorenzo.rosasco@unige.it}
\author[2]{\fnm{Silvia} \sur{Villa}}\email{silvia.villa@unige.it}

\affil[1]{\orgname{MaLGa - DIBRIS -  Università degli studi di Genova}, \country{Italy}}
\affil[2]{\orgname{MaLGa - DIMA -  Università degli studi di Genova}, \country{Italy}}
\affil[3]{\orgname{Italian Institute of Technology, Genova,} \country{Italy}}      

\begin{document}

\abstract{
%
Blind inverse problems arise in many experimental settings where both the signal of interest and the forward operator are (partially) unknown. In this context, methods developed for the non-blind case cannot be adapted in a straightforward manner due to identifiability issues and symmetric solutions inherent to the blind setting. Recently, data-driven approaches have been proposed to address such problems, demonstrating strong empirical performance and adaptability. However, these methods often lack interpretability and are not supported by theoretical guarantees, limiting their reliability in domains such as applied imaging where a blind approach often relates to a calibration of the acquisition device. In this work, we shed light on learning in blind inverse problems within the insightful framework of Linear Minimum Mean Square Estimators (LMMSEs). We provide a theoretical analysis, deriving closed-form expressions for optimal estimators and extending classical recovery results to the blind setting. In particular, we establish equivalences with tailored Tikhonov-regularized formulations, where the regularization structure depends explicitly on the distributions of the unknown signal, of the noise, and of the random forward operator. We also show how the reconstruction error converges as the noise and the randomness of the operator diminish when we use a source condition assumption. Furthermore, we derive finite-sample error bounds that characterize the performance of the learned estimators as a function of the noise level, problem conditioning, and number of available samples. These bounds explicitly quantify the impact of operator randomness and show explicitly the dependence of the associated convergence rates to this randomness factors. Finally, we validate our theoretical findings through illustrative exemplar numerical experiments that confirm the predicted convergence behavior.
}

\keywords{Blind inverse problems; Linear minimum mean square estimation; Tikhonov regularization; Random forward operators; Error bounds}

\maketitle
      
\section{Introduction}



Blind inverse problems arise in many scientific and engineering applications, where the forward operator describing the - often linear - acquisition process is unknown or only partially known. An illustrative example happens regularly in experimental sciences where the calibration of the measuring device (microscope, telescope,\dots) is never perfectly known. This lack of operator knowledge fundamentally limits the quality of the reconstructed signal of interest and necessitates the development of principled and computationally efficient solution methods. Among the various instances of blind inverse problems, blind image deconvolution is arguably one of the most extensively studied, across both image processing and inverse problems communities; see, for example, \cite{Kundur1996,Bertero_1998,Ming2003,Levin2009} for surveys and representative contributions.

Classical reconstruction methods are based on retrieving a \textit{maximum a posteriori} solution where the prior functionals of the signal and operator structure are handcrafted based on expert modeling~\cite{benning_modern_2018}, which makes them difficult to use in general contexts. 
On the other hand, over the past decade, learning-based methods have achieved state-of-the-art (empirical) performance in several applied inverse problems contexts upon suitable supervised/unsupervised training and availability of training data~\cite{arridge_solving_2019}. However, such data-driven approaches are often difficult to interpret and typically lack rigorous theoretical guarantees, which are essential in safety-critical applications such as biomedical imaging \cite{Ongie2020,Knoll2020}. Most learning-based approaches to inverse problems are built upon highly nonlinear models, which often hinder interpretability and theoretical analysis. In this work, we adopt a complementary viewpoint and focus on learned \textit{linear} estimators, whose does include, for instance, linear neural networks \cite{saxe2014,Gunasekar2018,woodworth_kernel_2020}. This restriction has the potential to enable a complete theoretical treatment while retaining sufficient modeling power to investigate fundamental aspects of blind inverse problems~\cite{alberti2021learning, chirinos2024learning}.

\smallskip

Given two finite-dimensional Hilbert spaces $X, Y$, we consider in this paper blind linear inverse problems, in which the goal is to recover an unknown signal $\xvc \in X$, assumed to be drawn from a distribution $\pi_\xv$, from indirect and noisy observations $\yv \in Y$, when the forward operator $\fop\in \mathcal{L}(X,Y)$ is unknown or only partially known. The observation model reads
\begin{align}\label{eq:prob_inv}
  \yv = \fop\xvc + \veps,
\end{align}
where $\veps$ denotes a centered Gaussian random vector,
$\veps \sim \mathcal{N}(0,\beta \Id)$, where $\beta > 0$ denotes the noise level and $\Id:Y\rightarrow Y $ the identity map.
To account for the blind setting, the forward operator $\fop$ is treated as a random matrix drawn from a distribution
$\pi_\fop$ and is not directly observed; this contrasts with the classical non-blind formulation of~\eqref{eq:prob_inv}, where $\fop$ is assumed
to be known. Even when one has full knowledge of the problem, the potential ill-posedness of $\fop$ alongside noise perturbations can make these inverse problems extremely challenging to solve. Throughout this work, we restrict our attention to the finite-dimensional case, with
$X = \R^n$, $Y = \R^m$, and $\fop \in \R^{m \times n}$, leaving the generalization to infinite-dimensional case for future work.

Classical variational approaches computing  stable solutions of ill-posed inverse problems of retrieving $\xv$ from $\yv$ are based on the minimization of suitable energy functionals. 
In the non-blind case, this amounts to solving
\begin{align}\label{eq:energy_min_variational}
  \Argmin_{\xv} \; D(\fop \xv, \yv) + J(\xv),
\end{align}
where $D : Y \times Y \to \R^+$ is a data-fidelity term measuring consistency between the (known) model $\fop$ and the observations, and 
$J : X \to \R^+$ is a regularization functional that penalizes undesirable structures in $\xv$ and mitigates the ill-posedness of the problem. Such variational formulations can be derived from a Bayesian interpretation: for appropriate choices of $D$ and $J$, solving \eqref{eq:energy_min_variational} is equivalent to computing a maximum a posteriori (MAP) estimate, namely
\begin{equation*}
\Argmin_{\xv} \; -\log p(\yv \mid \xv) - \log p(\xv),
\end{equation*}
where the link with~\eqref{eq:energy_min_variational} is obtained if the likelihood and prior are taken as negative exponentials~\cite{geman1984stochastic}. These approaches can naturally be extended to the blind setting by also considering prior information over the distribution of $\fop$. That is, if we consider $\xv$ and $\fop$ to be independent we can write our MAP formulation as
\begin{equation*}
\Argmin_{\xv,\fop} \; -\log p(\yv \mid \xv, \fop) - \log p(\xv) - \log p(\fop).
\end{equation*}
Here the prior over $\fop$ could be taken as a first step as simply a domain constraint, such as for convolution operators that we often require to live on the simplex. Of course, if one has access to more information on the structure of $\fop$, it can be integrated in this formulation.

While natural, this blind MAP formulation has been shown to suffer from identifiability problems in some contexts~\cite{Fergus2006,Rameshan2012,Perrone2014}, even when using advanced diffusion-based prior~\cite{nguyen2025diffusion}.
In the context of blind deconvolution of natural images with no noise, for instance, the no-blur solution, where the optimal convolution kernel is a Dirac and where the optimal $\xv = \yv$, turns out indeed to be the global MAP solution. Similar data-driven methods~\cite{chihaouiBlindImageRestoration2024, chungParallelDiffusionModels2022, li2024blinddiff, sanghvi2024kernel, laroche2024fast} need to either add new regularization terms to work in practice, without any kind of theoretical motivation, or optimize a marginalized posterior rather than the joint posterior.

Given such poor adaptation of such reconstruction procedures to the blind case, it is natural to explore alternative estimators that might be more robust in this setting. A good candidate is the Minimum Mean Square Estimator (MMSE), that is given by 
\begin{align*}
\Argmin_{\widehat{\xv}} \Expect{}{\norm{\widehat{\xv} - \xv }^2\mid \yv} = \Expect{}{\xv\mid\yv}.
\end{align*}
It is known to be more stable~\cite{freeman1995bayesian}, usually at the cost of being computationally more demanding as one has to integrate over the posterior distribution $p(\xv\vert\yv)$ or, when sample approximations are used, draw many samples to compute sufficiently precise approximations. However, in the simplified case where a Linear MMSE (LMMSE) estimator $\Lv:Y\rightarrow X$ is considered, closed-form formulas can be found, see, e.g. \cite{Kay1998} for a survey. This has been extensively studied for decades in the non-blind case, being such estimators at the very basis of the Wiener-Kolmogorov ~\cite{wiener1964extrapolation} and Kalman filters~\cite{kalmanNewApproachLinear1960} (see~\cite{kailath2001linear} for a review over linear estimator theory). 

Interestingly, LMMSE estimators can be characterized as solutions of quadratic problems with a data-fidelity term $D$ corresponding to the mean squared error and the regularization term $J$ being a Tikhonov type; see \cite{Kay1998,alberti2021learning} for the finite- and infinite-dimensional settings, respectively. In particular, the corresponding regularization takes the form
$
J(\xv) = \|\Mv (\xv-\xv_0)\|^2,
$
with a suitable linear operator $\Mv : X \to Z$ and vector $\xv_0 \in X$. 
From this perspective, constructing or learning an LMMSE estimator amounts to identifying an optimal Tikhonov regularization operator $\Mv$ for the inverse problem under consideration. In \cite{alberti2021learning} and \cite{chirinos2024learning} this problem has been studied in the non-blind setting using datasets of signal–observation pairs and empirical risk minimization to compute optimal operators $\Mv$. Furthermore, the recent work~\cite{banert2025noise} provides an in-depth analysis of the cases where the noise covariance is unknown, and the performance gap that appears due to this lack of information in the solutions of the corresponding Tikhonov-like problems.

\medskip

Although a substantial body of work exists for the non-blind case, the extension of these results to the blind setting, including the computation of LMMSE estimators, has not yet been systematically addressed as the community has focused on MAP-type approaches or on highly non-linear learning based methods, bypassing the study of linear estimators for this setting.

\paragraph{Contributions.}
In this paper, we provide a systematic study of linear minimum mean square error (LMMSE) estimation in the context of blind inverse problems. Our main contributions are twofold:
\begin{enumerate}[label=(\roman*)]
\item We derive explicit expressions of the LMMSE estimators for both the signal and the unknown operator and study how the knowledge of one affects the other. In either case, we identify the associated Tikhonov-regularized variational formulation, which serves as the basis for discussing joint estimation strategies from a variational (Tikhonov-like) perspective.
\item We address the problem of learning the LMMSE signal estimator from data in the blind setting. Under a H\"older-type source condition, we show that the reconstruction error of a regularized empirical LMMSE signal estimator can be bounded, in expectation, by the sum of an approximation error—depending on the noise level and the regularity of the problem—and a sampling error—depending on the conditioning of the problem and the number of available samples. For each error term, we provide theoretical results that characterize their behavior across different regimes.
\end{enumerate}
We complement our theoretical analysis with exemplar one-dimensional numerical experiments that validate the proposed estimators and illustrate the predicted error regimes.

\subsection{Notations}

%
Let $(\Omega, \mathcal{A}, \mathbb{P})$ be a probability space. Let $\zv:\Omega\rightarrow\R^d$ be a $d$-dimensional random vector.
We denote the expected value of a random vector $\zv\in\R^d$ as $\Expect{}{\zv} = \theta_\zv$ and the expected value of a random matrix $\Mv \in \R^{m\times n}$ as $\Expect{}{\Mv} = \Theta_\Mv$. We also write the cross-covariance matrix of two random vectors $\xv\in\R^n$ and $\yv\in\R^m$ by $\Cxy\in\R^{n\times m} = \Expect{}{\pa{\xv - \Expect{}{\xv}}\pa{\yv - \Expect{}{\yv}}\tp}$, and thus the notation $\Cxx\in\R^{n\times n}$ denote the covariance matrix of the random vector $\xv$. We denote by $\Vari{\zv}$ the vector (or matrix) containing the variances of each entry of the random vector $\zv$. We denote by $\CV^2(z) = \frac{\Vari{z}}{\Expect{}{z}^2} \in \R^+$ the squared coefficient of variations of the random variable $z$. We say that $a\lesssim b$ if $a\leq cb$ with $c>0$ a fixed constant. For a matrix $\Mv$, we say that $\Mv_{i,j}$ is the entry of row $i$ and column $j$, and we define the $i$-th row of $\Mv$ as $\Mv_i$. we denote by $\vecmat{\Mv}$ the vectorized matrix where we concatenate the rows of the matrix in lexicographic order. When we write $\Expect{}{\cdot}$ or $\pi_\cdot$, the law to be considered is the one of the random quantities of the input.

\section{LMMSE for blind inverse problems}  \label{sec:LLMSE_blind}

The notion of optimal (in the MSE sense) linear estimators is very useful as it can be understood in a very general sense. Consider two random vectors $\zv\in\R^n$ and $\wv\in\R^m$ drawn from distributions $\pi_\zv$ and $\pi_\wv$ which have finite first and second moments. Then, if one wishes to estimate $\zv$ from $\wv$, the LMMSE estimator uniquely defined by $(\Lv^*,\bv^*) \in \R^{n\times m} \times \R^n$ is given by a closed-form formula~ \cite[Equation 12.20]{Kay1998}:
\begin{align}\label{eq:LMMSE}\tag{LMMSE}
\hat{\zv} = \Lv^*\wv + \bv^* &= \Cv_{\zv\wv}\Cv_{\wv\wv}\inv\pa{\wv - \Expect{}{\wv}} + \Expect{}{\zv}\\
 \text{with }  \pa{\Lv^*,\bv^*} &= \Argmin_{\Lv\in\R^{n\times m}, \bv\in \R^n} \Expect{\zv,\wv}{\norm{\zv - \pa{\Lv\wv + \bv}}^2}. \nonumber 
\end{align}
The fact that this holds for any kind of relationship between $\zv$ and $\wv$ is the reason why the LMMSE or its variants have been so important over many fields~\cite{kailath2003view, anderson2005optimal}.

In the case of linear non-blind inverse problems ~\eqref{eq:prob_inv} where the forward operator $\fop:\R^n\rightarrow\R^m$ is known, we have a clear relationship between $\zv = \xv\in\R^n$ and $\wv = \yv \in \R^m$, as $\yv = \fop\xv+\veps$, which allows us to make explicit $\Cxy$ and $\Cyy$ as we wish to recover $\xv$ from $\yv$. In that context, if the noise is centered and independent from the signal, it is known~\cite[Theorem 12.1]{Kay1998} that the LMMSE is given by 
\begin{align}\label{eq:LMMSE-D}\tag{LMMSE-D}
\hat{\xv}_{\text{LMMSE}} = \Expect{}{\xvc} + 
\Cxx\fop\tp \pa{\fop 
	\Cxx\fop\tp + \Ceps}\inv \pa{\yv - \Expect{}{\yv}
}.
\end{align}
Note here, that the invertibility of $\Cyy = \fop \Cxx\fop\tp + \Ceps$ is generally not an issue. The matrix $\Ceps$ is often full-rank, or if it is not, one can use the corresponding pseudo-inverse thanks to the orthogonality principle. In the following, we will consider a regularized version of the LMMSE, which guarantees invertibility in general.

While~\eqref{eq:LMMSE-D} in non-blind case has been known for a long time, no extension, to the best of our knowledge, to the blind case where we have that $\fop$ is random exists in the literature. In the following we will see how one can extend the framework of LMMSE estimation to the blind  setting to recover either $\xvc$ or $\fop$ or both.


\subsection*{General assumptions}
For the remaining of this work, we will consider the following general assumptions:

\begin{enumerate}[label=(A-\arabic*), itemindent=3em, partopsep=1em, itemsep=0.5em]
\item \label{ass:mutually_indep} $\fop\sim\pi_\fop$, $\xv\sim\pi_\xv$ and $\veps\sim\pi_{\veps}$ are mutually independent. 
\item \label{ass:finite_moments} $\fop\sim\pi_\fop$, $\xv\sim\pi_\xv$ and $\veps\sim\pi_{\veps}$ have finite first and second moments.
\item \label{ass:veps_centered} $\pi_{\veps}$ is a centered distribution with $\Ceps = \beta\Id$ for $\beta\in\R_*^+$ and $\Id\in\R^{m\times m}$ the identity matrix
\item \label{ass:fop_share_sing_vec} Any $\fop\sim\pi_\fop$ can be decomposed as $\fop=\Uv\Dv\Vv$ with $\Dv\in\R^{m\times n}$ a diagonal-like matrix and $\Uv\in\R^{m\times m}$ and $\Vv\in\R^{n\times n}$ two fixed orthonormal matrices. Furthermore, we consider that each singular value $\varsigma_i \sim \pi_{\varsigma_i}$ of $\fop$  with $\pi_{\varsigma_i}$ having finite first and second moments and $\Expect{}{\varsigma_i} > 0$.

\item \label{ass:Cxx_Cyy}  $\norm{\Cyy} \geq \norm{\Cxx}$ with $\norm{\cdot}$ the operator norm.
\end{enumerate}

Assumption~\ref{ass:mutually_indep} is standard and realistic, as for many applications, the uncertainty over the operator is independent from the observed signal. For example, the uncertainty over the convolution kernel of a telescope (unknown meteorological conditions) is independent from the part of the sky observed. The independence to the noise distribution is also very standard and corresponds to standard Gaussian noise assumptions. Assumption~\ref{ass:finite_moments} is necessary to the existence of our estimators and is  standard in inverse problems where $\fop$ is not random. Assumption~\ref{ass:veps_centered} is here used for simplicity of the presentation, but does not impede the generality of our results.  
Assumption~\ref{ass:fop_share_sing_vec} will be necessary only to derive  approximation error results. It ensures that all forward operators drawn from $\pi_\fop$ share the same singular vectors. While seeming like a limiting assumption, we note that many real-world problems are based on specific types of operators which share the same basis. For example, the elements of the set of blurring operators are all diagonalizable on a Fourrier basis. 
 Finally,~\ref{ass:Cxx_Cyy} is made for simplicity in the computation of some constant in the theorems but can be removed easily at the cost of slightly less clear constants.

\subsection{LMMSE to recover $\xvc$}

Here we consider the blind setting where given the relation $\yv = \fop\xv + \veps$, $\xv$ is unknown and $\fop$ is random and let us derive an LMMSE estimator to recover $\xvc$ from $\yv$. Looking back at the general~\eqref{eq:LMMSE}, we need to derive the expression of $\Cxy$ and $\Cyy$ in this blind setting since $\Lv^* = \Cxy\Cyy\inv$. Let us start by observing that $\Cxy$ can be expressed as
\begin{align*}
\Cxy &= \Expect{}{\pa{\xvc - \mux}\pa{\fop\xv + \veps - \Theta_\fop\mux}\tp} = \Expect{}{\xvc\xvc\tp\fop\tp} - \mux\mux\tp\Theta_\fop\tp\\
&= \pa{\Cxx + \mux\mux\tp}\Theta_\fop\tp - \mux\mux\tp\Theta_\fop\tp = \Cxx\Theta_\fop\tp
\end{align*}
where we used that $\fop$ and $\xvc$ are independent, and $\veps$ is zero-mean.

The derivation of $\Cyy$ requires more attention. We first observe that since $\veps$ is centered
\begin{align*}
\Cyy &= \Expect{}{\pa{\yv - \theta_\yv}\pa{\yv - \theta_\yv}\tp} = \Expect{}{\fop\xvc\xvc\tp\fop\tp} - \theta_\yv\theta_\yv\tp + \Ceps.
\end{align*}

Now, if $\fop$ is not random, this formulation coincides with to~\eqref{eq:LMMSE-D}. To understand how it changes for the blind case, let us look at one element of $\Expect{}{\fop\xvc\xvc\tp\fop\tp}$:
\begin{align*}
\pa{\Expect{}{\fop\xvc\xvc\tp\fop\tp}}_{i,j} &= \sum_{k=1}^n\sum_{q=1}^n \Expect{}{\fop_{i,k}\fop_{j,q}} \Expect{}{\xvc_k\xvc_q}\\
&= \sum_{k=1}^n\sum_{q=1}^n \pa{\Cv_{\fop_i\fop_j} + \theta_{\fop_i}\theta_{\fop_j}\tp}_{k,q} \pa{\Cxx + \mux\mux\tp}_{k,q}\\
&= \sum_{k=1}^n\sum_{q=1}^n \pa{\Cv_{\fop_i\fop_j} \odot \Cxx}_{k,q} + \theta_{\fop_i}\tp\Cxx\theta_{\fop_j} + \mux\tp\Cv_{\fop_i\fop_j}\mux + \theta_{\yv_i}\theta_{\yv_j}.
\end{align*}

Let us define the matrix $\Dv\in\R^{m\times m}$ such that $\Dv_{i,j} = \sum_{k=1}^n\sum_{q=1}^n \pa{\Cv_{\fop_i\fop_j} \odot \Cxx}_{k,q}$, and let $\Caa\in\R^{mn\times mn}$ be the covariance matrix of $\av = \vecmat{\fop} \in \R^{mn}$. With these element defined, we can express $\Cyy$  as
\begin{align}\label{eq:Cyy_blind}
\Cyy = \Theta_\fop\Cxx\Theta_\fop\tp +  \pa{\Id_m \otimes \theta_\xv\tp}\Caa\pa{\Id_m \otimes \theta_\xv\tp}\tp + \Dv + \Ceps.
\end{align}

A few remarks are in order. First, we see that now $\Cyy$ is the sum of four terms. The first three components represent the interactions between the randomness of $\xvc$ and the one of $\fop$. In particular, the first term represents how the covariance of $\xvc$ interacts with the mean of $\fop$; this is the term that appear in~\eqref{eq:LMMSE-D} when $\fop$ is not random and where $\Theta_\fop = \fop$. The second term represents the interaction between the covariance of $\fop$ with the mean of $\xvc$. Such term  disappears if $\fop$ is deterministic or if $\xvc$ is centered. Finally, the third term $\Dv$ represent the interaction between the two covariances of $\xv$ and $\fop$.  The last term (which does not change with respect to the non-blind case) is the noise, which we need to take into account the same way as before.

While the expression of $\Dv$ is general, under some special cases $\Dv$ simplifies:\\
\textbf{Independent rows}: If $\fop$ has independent rows, then, when $i \neq j$, $\Cv_{\fop_i \fop_j} = 0$ which means that $\Dv$ is diagonal and  the elements of the diagonal are a linear combination of the elements of $\Cxx$ and $\Cv_{\fop_i \fop_i}$.\\
\textbf{Independent columns}: In the case where $\fop$ has independent columns, all the matrices $\Cv_{\fop_i \fop_j}$ become diagonal, and $\Dv$ depends only on $\Vari{\xvc}$ but not anymore on the entries of $\Cxx$.\\ 
\textbf{Independent entries}: When $\fop$ has independent entries, $\Dv$ becomes simply $\Dv = \diag{\Vari{\fop}\Vari{\xvc}}$. In that case, only the variances of the elements of $\fop$ and $\xvc$ matters.

\smallskip


Having defined $\Cxy$ and $\Cyy$, we finally write down the LMMSE signal estimator in the blind case:
\begin{align}\label{eq:LMMSE-B}\tag{LMMSE-$\xv$}
\hat{\xv} = \theta_\xv + \Cxx\Theta_\fop\tp\pa{\Theta_\fop\Cxx\Theta_\fop\tp +  \pa{\Id_m \otimes \theta_\xv\tp}\Caa\pa{\Id_m \otimes \theta_\xv\tp}\tp + \Dv + \Ceps}\inv\pa{\yv - \theta_{\yv}}
\end{align} 

An important characterization of such estimator is given in terms of the solution of a Tikhonov regularized problem, see~\cite[Theorem 12.1]{Kay1998} for the non-blind case. 
\begin{lemma}[\eqref{eq:LMMSE-B} as Tikhonov solution]\label{th:LMMSE-TikhG}
Assume that $\Cxx$ and $\Cyy$ are invertible, then the estimate given in \eqref{eq:LMMSE-B} is the solution of the generalized Tikhonov problem:
\begin{align}\label{eq:Tikh-sig}\tag{Tikh-$\xv$}
\Argmin_{\zv_1} \norm{\Theta_\fop \zv_1 - \yv}^2_{\Cp\inv} + \norm{\zv_1 - \mux}^2_{\Cxx\inv}
\end{align}
where $\norm{\zv}^2_\Mv = \zv\tp\Mv\zv$ and $\Cp = \pa{\Id_m \otimes \theta_\xv\tp}\Caa\pa{\Id_m \otimes \theta_\xv\tp}\tp + \Dv + \Ceps$. 
\end{lemma}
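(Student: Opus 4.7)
The plan is to compute the minimizer of the generalized Tikhonov objective~\eqref{eq:Tikh-sig} in closed form by first-order optimality, and then to massage that expression via the matrix inversion (push-through) identity into the form~\eqref{eq:LMMSE-B}.

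First I would check that the Tikhonov problem is well posed. Since $\Ceps = \beta\Id$ with $\beta>0$ and the two other contributions to $\Cp$ are positive semidefinite, $\Cp$ is positive definite and hence invertible; together with invertibility of $\Cxx$, the objective in~\eqref{eq:Tikh-sig} is strictly convex and coercive, so the minimizer is unique and characterized by the vanishing of the gradient. Setting the gradient of
$\norm{\Theta_\fop \zv - \yv}^2_{\Cp\inv} + \norm{\zv - \mux}^2_{\Cxx\inv}$
to zero yields
\begin{equation*}
\pa{\Theta_\fop\tp \Cp\inv \Theta_\fop + \Cxx\inv} \zv = \Theta_\fop\tp \Cp\inv \yv + \Cxx\inv \mux,
\end{equation*}
so the unique minimizer is
$\zv^\star = \pa{\Theta_\fop\tp \Cp\inv \Theta_\fop + \Cxx\inv}\inv \pa{\Theta_\fop\tp \Cp\inv \yv + \Cxx\inv \mux}$.

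Next I would apply the push-through identity
\begin{equation*}
\Cxx\Theta_\fop\tp\pa{\Theta_\fop\Cxx\Theta_\fop\tp + \Cp}\inv = \pa{\Theta_\fop\tp \Cp\inv \Theta_\fop + \Cxx\inv}\inv \Theta_\fop\tp \Cp\inv,
\end{equation*}
which follows from a direct verification (multiplying both sides on the left by $\Theta_\fop\tp \Cp\inv \Theta_\fop + \Cxx\inv$ and on the right by $\Theta_\fop\Cxx\Theta_\fop\tp + \Cp$ yields $\Theta_\fop\tp \Cp\inv \Theta_\fop \Cxx \Theta_\fop\tp + \Theta_\fop\tp$ on both sides). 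Noting that the bracket coefficient of $\yv$ in~\eqref{eq:LMMSE-B} is precisely $\Cxx\Theta_\fop\tp(\Theta_\fop\Cxx\Theta_\fop\tp + \Cp)\inv$ (since $\Cyy$ in~\eqref{eq:Cyy_blind} equals $\Theta_\fop\Cxx\Theta_\fop\tp + \Cp$), this rewrites the LMMSE-B estimator as
\begin{equation*}
\mux + \pa{\Theta_\fop\tp \Cp\inv \Theta_\fop + \Cxx\inv}\inv \Theta_\fop\tp \Cp\inv \pa{\yv - \theta_\yv}.
\end{equation*}

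Finally, I would use the identity $\theta_\yv = \Theta_\fop \mux$, which follows from independence of $\fop$ and $\xvc$ and the fact that $\veps$ is centered. Factoring $\pa{\Theta_\fop\tp \Cp\inv \Theta_\fop + \Cxx\inv}\inv$ from the expression above, the contributions $\Theta_\fop\tp \Cp\inv \Theta_\fop \mux$ and $\Theta_\fop\tp \Cp\inv \theta_\yv$ cancel, leaving exactly $\zv^\star$. The only real obstacle is bookkeeping with the push-through identity and tracking the $\theta_\yv = \Theta_\fop\mux$ cancellation; everything else is routine linear algebra.
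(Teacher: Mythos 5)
Your proposal is correct and rests on essentially the same algebra as the paper's proof: both reduce to the identity $\Cxx\Theta_\fop\tp(\Theta_\fop\Cxx\Theta_\fop\tp+\Cp)\inv=(\Theta_\fop\tp\Cp\inv\Theta_\fop+\Cxx\inv)\inv\Theta_\fop\tp\Cp\inv$, which the paper obtains by expanding with Woodbury and you verify directly via push-through. Your write-up is in fact slightly more complete, since you explicitly check positive definiteness of $\Cp$ and carry out the cancellation of the affine terms using $\theta_\yv=\Theta_\fop\mux$, steps the paper leaves implicit.
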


\begin{proof}
See Section~\ref{proof:lmmse-tikhG}.
\end{proof}

A special case of this result is when $\xvc \sim \pi_{\xv} = \mathcal{N}(0,\sigma_x^2\Id)$, $\veps \sim \pi_{\veps} = \mathcal{N}(0,\sigma_{\veps}^2\Id)$ and when $\fop$ has independent columns and $\forall i$, $\sum_{j=1}^m \sigma_{\fop_{ij}}^2 = c_\fop^2$ a constant value. In that case, we have that $\Cp = (c_\fop^2 \sigma_{x}^2 + \sigma_{\veps}^2) \Id$ and $\Cxx = \sigma_{x}^2 \Id$, which means that the LMMSE corresponds to the solution of a classical Tikhonov problem with regularization parameter $\lambda = c_\fop^2 + \frac{\sigma_{\veps}^2}{\sigma_{x}^2}$, which shows that the optimal parameter in this case takes into account not only the amount of noise w.r.t. the prior information on $\xv$, but also the randomness of $\fop$.


\subsection{LMMSE to recover $\fop$}

Similar as above, we can address the question of constructing a LMMSE estimator of $\fop$ given $\yv$. As for the estimation of $\xvc$, we thus look for an optimal linear estimator $\Lv_{\fop} \in \R^{mn \times m}$ and bias $\bv_\fop \in \R^{mn}$ such that:
\[
\Lv_\fop, \bv_\fop \in \Argmin_{\Lv,\bv} \Expect{}{\norm{\Lv\yv - \mathbf{a} + \bv}^2}
\] 
with $\av = \vecmat{\fop}$. To perform the matrix recovery, we are minimizing over the Frobenius norm. By~\eqref{eq:LMMSE}, we know that $\Lv_\fop$ takes the form $\Lv_\fop=\Cay\Cyy\inv$. with $\Cyy$ as in~\eqref{eq:Cyy_blind} and $\bv_\fop = \Expect{}{\av} - \Lv_\fop\Expect{}{\yv}$. Let us compute $\Cay$. To do so,  we will first provide an observation that will be very useful for the rest of this section, which is:
\begin{align}\label{eq:equivalence_yv_kron}
\yv = \fop \xvc + \veps = \pa{\Id_m \otimes \xvc\tp}\av + \veps
\end{align}
where $\Id_m$ is the identity matrix of size $m\times m$. We can thus now easily express $\Cay$:
\begin{align*}
\Cay &= \Expect{}{\av\yv\tp} - \theta_\av\theta_\yv\tp = \Expect{}{\av\av\tp}\pa{\Id_m \otimes \theta_\xv\tp}\tp - \theta_\av\theta_\yv\tp\\
&= \pa{\Caa + \theta_\av\theta_\av\tp} \pa{\Id_m \otimes \theta_\xv\tp}\tp - \theta_\av\theta_\av\tp \pa{\Id_m \otimes \theta_\xv\tp}\tp\\
&= \Caa\pa{\Id_m \otimes \theta_\xv\tp}\tp
\end{align*}
where we used the independence between $\fop$, $\xvc$ and $\veps$ and that $\Expect{}{\veps}=0$. Thus we can express the optimal linear estimator of $\av$ recalling~\eqref{eq:Cyy_blind} to get:
\begin{align}\label{eq:LMMSE_fop}\tag{LMMSE-$\fop$}
\hat{\av} = \Caa\pa{\Id_m \otimes \theta_\xv\tp}\tp\pa{\mufop \Cxx \mufop\tp + \pa{\Id_m \otimes \theta_\xv\tp}\Caa\pa{\Id_m \otimes \theta_\xv\tp}\tp + \Dv + \Ceps}\inv \pa{\yv - \theta_{\yv}} + \theta_\av.
\end{align}
As for~\eqref{eq:LMMSE-B}, we can express~\eqref{eq:LMMSE_fop} as the solution of a Tikhonov regularized problem akin to the result of Lemma~\ref{th:LMMSE-TikhG}:

\begin{lemma}\label{th:LMMSE_fop_Tikh}
Assume that $\Caa$ and $\Cyy$ are invertible, then the estimate given in \eqref{eq:LMMSE_fop} is the solution of the generalized Tikhonov problem:
\begin{align*}
\Argmin_{\zv_2} \norm{\pa{\Id_m \otimes \theta_\xv\tp}\zv_2 - \yv}^2_{\Cv_{p_2}\inv} + \norm{\zv_2 - \theta_{\av}}^2_{\Caa\inv}.
\end{align*}
where $\Cv_{p_2} = \theta_\fop\Cxx\theta_\fop\tp + \Dv+\Ceps$.
\end{lemma}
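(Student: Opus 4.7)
The plan is to mirror the proof of Lemma~\ref{th:LMMSE-TikhG}: reduce the Tikhonov problem to its normal equations, then apply a Woodbury-type identity to recover exactly the closed form in~\eqref{eq:LMMSE_fop}. The key linearization is provided by~\eqref{eq:equivalence_yv_kron}: setting $\Bv := \Id_m \otimes \mux\tp \in \R^{m\times mn}$, we have $\widehat{\fop}\mux = \Bv\widehat{\av}$, so the Tikhonov functional becomes a standard quadratic in the variable $\widehat{\av}$:
\begin{equation*}
F(\widehat{\av}) = \norm{\Bv\widehat{\av} - \yv}^2_{\Cp\inv} + \norm{\widehat{\av} - \theta_\av}^2_{\Caa\inv}.
\end{equation*}

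The first step is to compute $\nabla F(\widehat{\av}) = 2\Bv\tp\Cp\inv(\Bv\widehat{\av}-\yv) + 2\Caa\inv(\widehat{\av}-\theta_\av)$ and set it to zero, obtaining the normal equations
\begin{equation*}
\pa{\Bv\tp\Cp\inv \Bv + \Caa\inv}\widehat{\av} = \Bv\tp \Cp\inv \yv + \Caa\inv \theta_\av.
\end{equation*}
Under the invertibility assumption on $\Caa$ (and $\Cyy$, which as we shall check implies invertibility of $\Cp + \Bv\Caa\Bv\tp$), this gives a unique minimizer.

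The second step is to match this minimizer with~\eqref{eq:LMMSE_fop}. I would apply the identity
\begin{equation*}
\pa{\Bv\tp\Cp\inv \Bv + \Caa\inv}\inv \Bv\tp\Cp\inv = \Caa \Bv\tp \pa{\Cp + \Bv\Caa\Bv\tp}\inv,
\end{equation*}
(a push-through form of Woodbury's identity), and its companion
\begin{equation*}
\pa{\Bv\tp\Cp\inv \Bv + \Caa\inv}\inv \Caa\inv = \Id - \Caa \Bv\tp \pa{\Cp + \Bv\Caa\Bv\tp}\inv \Bv.
\end{equation*}
Note that $\Cp + \Bv\Caa\Bv\tp = \theta_\fop\Cxx\theta_\fop\tp + \pa{\Id_m\otimes\theta_\xv\tp}\Caa\pa{\Id_m\otimes\theta_\xv\tp}\tp + \Dv + \Ceps$ is precisely the matrix inverted in~\eqref{eq:LMMSE_fop}, i.e. $\Cyy$ from~\eqref{eq:Cyy_blind}. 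To handle the bias, I would use that $\Bv\theta_\av = \pa{\Id_m\otimes\mux\tp}\vecmat{\theta_\fop} = \theta_\fop\mux = \theta_\yv$, so that the resulting additive term $\theta_\av - \Caa\Bv\tp(\Cp+\Bv\Caa\Bv\tp)\inv \theta_\yv$ combines with $\Caa\Bv\tp(\Cp+\Bv\Caa\Bv\tp)\inv \yv$ to yield exactly $\Caa\Bv\tp(\Cyy)\inv(\yv-\theta_\yv) + \theta_\av$.

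There is no substantive obstacle: the proof is essentially algebraic and relies only on the Kronecker rewriting~\eqref{eq:equivalence_yv_kron} and the standard Woodbury identity. The only point requiring some care is justifying the invertibility of $\Bv\tp\Cp\inv\Bv + \Caa\inv$ and of $\Cp + \Bv\Caa\Bv\tp$ from the stated hypothesis that $\Caa$ and $\Cyy$ are invertible, which follows because $\Cyy$ is exactly the latter matrix by~\eqref{eq:Cyy_blind}, and the former is then invertible by Woodbury.
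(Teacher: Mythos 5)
Your proposal is correct and follows essentially the same route as the paper: rewrite $\widehat{\fop}\mux$ as $\pa{\Id_m\otimes\mux\tp}\widehat{\av}$ via~\eqref{eq:equivalence_yv_kron}, pass to the normal equations, and apply the Woodbury/push-through identity to recognize $\Cp + \pa{\Id_m\otimes\theta_\xv\tp}\Caa\pa{\Id_m\otimes\theta_\xv\tp}\tp$ as $\Cyy$ and the resulting operator as $\Cay\Cyy\inv$. Your treatment is in fact somewhat more complete than the paper's, which only states the Woodbury identity and leaves the constant/bias terms (your observation that $\pa{\Id_m\otimes\mux\tp}\theta_\av = \theta_\yv$) implicit.
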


\begin{proof}
See Section~\ref{proof:lmmse-tikh-op}
\end{proof}

The weights of the the data-fidelity term is similar to the one in Lemma~\ref{th:LMMSE-TikhG}, which is expected as one still has to deal with the noise and the interaction between the uncertainties of $\fop$ and $\xv$ represented by $\Dv$. On the other hand, the weight of the Tikhonov regularization is tailored to enforcing regularization on the operator and not on the signal. As, we are considering the LMMSE that minimizes the Frobenius norm, it is natural to obtain something similar to~\eqref{eq:LMMSE-B}. It would be interesting to consider LMMSE defined in terms of other matrix norms. We leave this question for future work.


\subsection{Remarks on joint LMMSE estimation}

We combine the individual LMMSE estimates~\eqref{eq:LMMSE-B} and~\eqref{eq:LMMSE_fop} into a joint optimization framework of the form
\begin{equation} \label{eq:joint_minimization}
\Argmin_{\zv_1\in\R^n,\zv_2\in\R^mn} 
\norm{\Theta_\fop \zv_1 - \yv}^2_{\Cv_{p}^{-1}}
+ \norm{\pa{\Id_m \otimes \theta_\xv\tp}\zv_2 - \yv}^2_{\Cv_{p_2}\inv}
+ \norm{\zv_1 - \mux}^2_{\Cxx^{-1}}
+ \norm{\zv_2 - \theta_{\av}}^2_{\Caa^{-1}},
\end{equation}
where $\Cv_{p}$ and $\Cv_{p_2}$ are chosen as the covariance matrices appearing in Lemmas~\ref{th:LMMSE-TikhG} and~\ref{th:LMMSE_fop_Tikh}, respectively. In contrast to classical MAP-based formulations, which involve a joint data-fidelity term of the form $\|\fop \xv - \yv\|^2$ leading to intrinsically non-convex optimization problems and potential issues (see, e.g., \cite{Rameshan2012,Perrone2014,nguyen2025diffusion}), the objective \eqref{eq:joint_minimization} relies on a different structure of the data-fidelity term. Specifically, the contribution of each unknown is marginalized by taking expectations with respect to the other variable, resulting in two decoupled quadratic data terms. This decoupling renders \eqref{eq:joint_minimization} jointly convex and allows the minimization to be carried out in a single step, since the estimate of $\xv$ (resp.\ $\fop$) does not depend on the current estimate of $\fop$ (resp.\ $\xv$). Proceeding as in Lemmas~\ref{th:LMMSE-TikhG} and~\ref{th:LMMSE_fop_Tikh}, one can thus show that solving \eqref{eq:joint_minimization} is equivalent to computing the joint linear MMSE estimator
\begin{align*}
\min_{\Lv_\xv,\Lv_\av}
\Expect{}{\norm{\Lv_\xv \yv - \xv + \bv_\xv}^2 + \norm{\Lv_\av \yv - \av + \bv_\fop}^2},
\end{align*}
which minimizes the sum of the MSE associated with the signal and operator estimates where $\bv_\xv = \Expect{}{\xv} - \Lv_\xv\Expect{}{\yv}$ and $\bv_\fop = \Expect{}{\av} - \Lv_\av\Expect{}{\yv}$. By linearity, this joint problem decomposes into two independent LMMSE problems with respect to $\xv$ and $\fop$, which can be solved separately. 

Alternatively, note that by the tower property, the joint risk can equivalently be expressed in terms of conditional risks, e.g.,
\[
\Expect{}{\|\Lv_\xv \yv - \xv\|^2}
=
\Expect{\fop}{\Expect{\xv,\yv\mid\fop}{\|\Lv_\xv \yv - \xv\|^2 }}
\]
This reformulation naturally leads to a nested (bilevel) optimization problem, in which the signal is estimated in a lower-level problem for a fixed forward operator, while the operator itself is identified at the upper level based on the resulting reconstruction, see, e.g., \cite{Hintermueller2015}. 


In the present work, we do not pursue this bilevel formulation further and focus instead on the estimation of the signal for a fixed joint LMMSE structure; exploring fully bilevel formulations for blind LMMSE estimation constitutes a further natural direction of future work.

%



\section{Empirical reconstruction of $\xvc$} \label{sec:LLMSE_empirical}


Now that we completed the LMMSE formulas in the blind setting, it is natural to wonder what kind of theoretical performance guarantees one should expect from such estimators. In order to do this, we will consider a setting where the information from the different probability distributions necessary to build the LMMSE estimators can only be approximated by using a finite number of paired signal-observation samples. That is, we consider to have access to $N\in\mathbb{N}$ i.i.d. samples $(\xv_k,\yv_k)_{k=1}^N$ where $\xv_k\sim \pi_\xv$ and $\yv_k = \fop_k\xv_k + \veps_k$ with $\fop_k \sim \pi_\fop$ and $\veps_k\sim\pi_{\veps}$. From these samples, we build a regularized empirical LMMSE operator $\widehat{\Lv}^\lambda_\xv$ defined by:
\begin{align}\label{eq:approx_LMMSE}
\widehat{\Lv}^\lambda_\xv = \underbrace{\frac{1}{N}\sum_{k=1}^N \pa{\xv_k - \theta_\xv}\pa{\yv_k - \theta_\yv}\tp}_{\approx\Cxy}\underbrace{\pa{\frac{1}{N}\sum_{k=1}^N \pa{\yv_k - \theta_\yv}\pa{\yv_k - \theta_\yv}\tp + \lambda\Id}\inv}_{\approx\Cyy\inv}.
\end{align}
This estimator uses the true first moment $\theta_\xv$ and $\theta_\yv$ of the distribution but we mention in a following remark how to deal with empirical mean estimation. We will note the true regularized estimator $\Lv^\lambda_\xv = \Cxy\pa{\Cyy + \lambda\Id}\inv$.

Note that we regularize the empirical approximation of $\Cyy$ by adding $\lambda\Id$ where $\lambda \in\R^+_*$ parametrize the strength of the regularization. This regularization ensures the invertibility of the matrix and will allow to better balance the classical bias-variance tradeoff  appearing when building such empirical estimators.

The error committed by the empirical LMMSE ( $\Expect{}{\norm{\widehat{\Lv}^\lambda_\xv\yv - \xvc + \bv }^2}$) will be split between an estimation error, that quantifies a suitable distance between the empirical estimator and the ideal one, and an approximation error due to the inherent randomness of the problem, coming from both the noise and the forward operator, alongside the limitation of using a linear estimator (bias). These error sources will be studied and discussed on their own in Section~\ref{sec:bias_error} and Section~\ref{sec:variance_error}, respectively.

The approximation error will be studied under a Hölder source condition, to ensure proper convergence rates of the problem.
\begin{definition}[Hölder source condition with random singular values] \label{def:source_cond_random_sv}
We say that a distribution $\pi_\xv$ obey a source condition if 
\begin{align}
\Cxx = \pa{\Expect{}{\pa{\fop\tp\fop}}}^\alpha,
\end{align}
for some $\alpha >0$.
\end{definition} 
Having this kind of source condition is very common when deriving convergence rates for inverse problems, and is a necessary condition to achieve fast convergence rates. Such a  Hölder source condition for the blind case is a natural extension of the source condition in the non-blind case which often reads as $\Cxx=\pa{\fop\tp\fop}^\alpha$~\cite{knapik2011bayesian}. In both these contexts, $\alpha$ implies a notion of regularity of the solution, where the higher $\alpha$ is, the smoother the solution. For the non blind case, it directly translates to better convergence rates with respect to the noise, but we will see that this changes in the blind setting.

We present in the next theorem our main theoretical error bound balancing the bias-variance trade-off and providing performance guarantees when using an empirical LMMSE in a blind inverse problem setting. 

\begin{theorem}\label{th:main_aprrox_and_sampling}
 Assume that $\norm{\xv - \theta_\xv}^2\leq \rho_\xv$ and $\norm{\yv - \theta_\yv}^2\leq \rho_\yv$ with $\rho_\xv,\rho_\yv \in \R^+ $ two constants such that $\rho_\xv<\infty$ and $\rho_\yv<\infty$ and that $\pi_\xv$ satisfies Definition~\ref{def:source_cond_random_sv} with $\alpha>0$. Let 
$K = \log\pa{n+m}\frac{4\max\pa{\rho_\xv,\rho_\yv}\norm{\Cyy}}{3\norm{\Cxx}^2}$ and $\gamma=\sigmin\pa{\Cyy}$. Then if the number of samples $N>K$ and $\lambda+\gamma \geq 4\sqrt{\frac{K}{N}}$, we have that 
\begin{align}\label{eq:main_th}
\Expect{}{\norm{\widehat{\Lv}^\lambda_\xv\yv - \xvc + \bv }^2} \lesssim m \pa{\beta+\lambda}^{\frac{\alpha}{\alpha+1}} + \sum_i^m \Expect{}{\varsigma_i^2}^{\alpha}\frac{\Vari{\varsigma_i}}{\Expect{}{\varsigma_i^2}} + m\frac{C}{\pa{\gamma+\lambda}N}\pa{1 + \frac{1}{\pa{\gamma+\lambda}^2} + \frac{1}{\gamma+\lambda}}
\end{align}
with probability $1 - 2\pa{n+m}\inv$ where $C$ is a constant depending on $\norm{\Cxy}$ and $K$, and $\bv = \pa{\Id - \widehat{\Lv}^\lambda_\xv\Theta_\fop}\theta_\xv$ the bias given by~\eqref{eq:LMMSE}.
\end{theorem} 
\begin{proof} This result is obtained by decomposing the error in two terms:
\begin{align}\label{eq:proof_main_th}
\Expect{}{\norm{\widehat{\Lv}^\lambda_\xv\yv - \xvc + \bv }^2} \leq  2\Expect{}{\norm{\Lv^\lambda_\xv\yv - \xv + \bv}^2} + 2\Expect{}{\norm{\widehat{\Lv}^\lambda_\xv\yv - \Lv^\lambda_\xv\yv}^2} 
\end{align}
where the first term is an approximation error term controlled through Theorem~\ref{th:error_bound_singular_values} while the second term is a stability term that we control with Corollary~\ref{corollary:sampling_bound}.
\end{proof}

A few remarks are in order. The first one is that we can separate the error from the uncertainties inherent to the problem from the error that comes from using a finite sample approximation of the LMMSE. The first term that depends on the noise level $\beta$ is very classical in the inverse problem literature and is known as the ``convergence rate'' which indicates how the expected error bound evolves when the noise level decreases. However, in the blind case a new term is added to the convergence rate which encodes the error that one can expect given the level of randomness on the forward operator. More precisely, we see in~\eqref{eq:main_th} that the rate depends on the ratio of the variance to the second moment of each singular value of $\fop$, scaled by the second moment to the power of the source condition variable. This relation between ill-posedness and randomness appears natural as high level of randomness on an important singular value will be much more impactful on the forward operator than the same randomness on a small magnitude singular value. 

The last term in~\eqref{eq:proof_main_th} represents the sampling error due to having only a finite number of samples $(\xv_k,\yv_k)_{k=1,\dots,N}$ to approximate the moments of their distributions. As it is classical for such sampling bounds, such term decreases with a rate of $O(1/N)$ and is scaled by some power of $(\gamma + \lambda)\inv$. Notably, for ill-conditioned problems, $\gamma$ will be very small and the regularization weight $\lambda$ will allow to avoid $1/\gamma$ to diverge that would in turn needs for $N$ to be extremely large to compensate the ill-posedness. Thus, the use of  $\lambda$ here controls the scaling ensuring that even with a limited number of samples, the sampling bound stays meaningful.

However, $\lambda$ has a negative impact on the noise convergence rate since $\lambda$ acts a trade-off that needs to be balanced depending on the noise level and the number of available samples for the given ill-posed problem at hand. While an optimal choice of $\lambda$ may exist, this  is dependent on exact information of the source condition parameter $\alpha$, and on $\gamma$, and $\beta$ which are typically not available. We also note that since our theoretical guarantees results are proved in expectation, the best $\lambda$ that optimizes this bound can be different from what is the best $\lambda$ for an individual practical problem which can be search for by cross-validation for example. 

Finally we note that we used for clarity a fixed probability $1 - 2\pa{n+m}\inv$ that depends on the problem dimensions but by changing $K$, one can obtain different high probability bound. The general results are presented and discussed in Theorem~\ref{th:main_sampling_bound}.
In the following, we examine how each error behave in more details.

\subsection{Approximation error under spectral regularization}\label{sec:bias_error}

%
In this section we will study the first term in~\eqref{eq:proof_main_th}, i.e., the approximation error of $\Lv_\xv^\lambda = \Cxy\pa{\Cyy + \lambda\Id}\inv$, that is, the regularized version of the LMMSE signal estimator~\eqref{eq:LMMSE-B}. As mentioned before, we consider this error under the source condition, expressed in Definition~\ref{def:source_cond_random_sv}. Recalling~\ref{ass:fop_share_sing_vec} we further have that all operators share the same singular vectors so that only singular values are drawn from some probability distribution.

We now write the approximation error bound that can be obtained for the blind case under spectral regularity:
\begin{theorem}\label{th:error_bound_singular_values}
Assume that the distribution of signals $\xv\sim\pi_\xv$ obeys~\ref{def:source_cond_random_sv}.  Then, the expected reconstruction error can be bounded as:
\begin{align}
\Expect{}{\norm{\Lv_\xv^\lambda\yv - \xvc + \bv}^2} \leq m \pa{\beta+\lambda}^{\frac{\alpha}{\alpha+1}} +\sum_i^m \Expect{}{\varsigma_i^2}^{\alpha}\frac{\Vari{\varsigma_i}}{\Expect{}{\varsigma_i^2}} \label{eq:approx_bound}
\end{align}
with  $\varsigma_i$ denoting the $i$-th singular value of $\fop$, and $\bv = \pa{\Id - \Lv^\lambda_\xv\Theta_\fop}\theta_\xv$ the bias given by~\eqref{eq:LMMSE}.
\end{theorem}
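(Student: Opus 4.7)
The plan is to exploit Assumption~\ref{ass:fop_share_sing_vec} to diagonalise the whole problem, reducing it to $m$ decoupled scalar Wiener-type estimation problems indexed by the singular values $\varsigma_i$. Writing $\fop = \Uv\Sigma\Vv$, I pass to the rotated coordinates $\Vv\xvc$, $\Uv\tp\yv$, $\Uv\tp\veps$; since $\Uv$ is orthogonal, the rotated noise is still $\mathcal{N}(0,\beta\Id)$ and the forward model becomes componentwise $(\Uv\tp\yv)_i = \varsigma_i\,(\Vv\xvc)_i + (\Uv\tp\veps)_i$. Definition~\ref{def:source_cond_random_sv} rewrites $\Cxx = \Vv\tp\,\diag\bigl(\Expect{}{\varsigma_i^2}^\alpha\bigr)\,\Vv$, so $\Vv\xvc$ has a diagonal covariance with entries $\Expect{}{\varsigma_i^2}^\alpha$. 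Assuming for clarity $\theta_\xv = 0$ (the non-centered case can be absorbed into an extra diagonal correction that only improves the denominator), a short matrix calculation then shows that both $\Vv\Cxy\Uv$ and $\Uv\tp\Cyy\Uv$ are diagonal, with respective entries $\Expect{}{\varsigma_i}\Expect{}{\varsigma_i^2}^\alpha$ and $\Expect{}{\varsigma_i^2}^{\alpha+1} + \beta$, so that $\Vv\Lv_\xv^\lambda\Uv$ is diagonal with entries
$$\ell_i := \frac{\Expect{}{\varsigma_i}\Expect{}{\varsigma_i^2}^\alpha}{\Expect{}{\varsigma_i^2}^{\alpha+1} + \beta + \lambda}.$$

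Next I rewrite $\Lv_\xv^\lambda\yv - \xvc + \bv = \Lv_\xv^\lambda\pa{(\fop - \Theta_\fop)\xvc + \veps}$ and compute the squared norm componentwise in the rotated basis. The cross terms in the expansion of the expectation vanish by~\ref{ass:mutually_indep}, the centeredness of $\veps$, and the diagonality of the rotated $\Cxx$, giving the exact identity
$$\Expect{}{\norm{\Lv_\xv^\lambda\yv - \xvc + \bv}^2} = \sum_{i=1}^m \ell_i^{\,2}\bigl(\Vari{\varsigma_i}\Expect{}{\varsigma_i^2}^\alpha + \beta\bigr).$$
Two elementary componentwise estimates now close the proof. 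For the noise contribution, the weighted AM--GM inequality $A+B \geq A^\gamma B^{1-\gamma}$ for $\gamma\in[0,1]$ and $A,B\geq 0$, applied with $A = \Expect{}{\varsigma_i^2}^{\alpha+1}$, $B = \beta+\lambda$ and $\gamma = (2\alpha+1)/(2\alpha+2)$, kills the $\Expect{}{\varsigma_i^2}$-dependence and gives $\ell_i^{\,2}(\beta+\lambda)\le(\beta+\lambda)^{\alpha/(\alpha+1)}$, whence the $m(\beta+\lambda)^{\alpha/(\alpha+1)}$ term. For the operator-randomness contribution, combining $\Expect{}{\varsigma_i}^2\le\Expect{}{\varsigma_i^2}$ with the trivial lower bound $\Expect{}{\varsigma_i^2}^{\alpha+1}+\beta+\lambda\ge\Expect{}{\varsigma_i^2}^{\alpha+1}$ in the denominator of $\ell_i^{\,2}$ yields $\ell_i^{\,2} \leq 1/\Expect{}{\varsigma_i^2}$, and summing delivers the second claimed term.

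The main obstacle I anticipate is certifying that the decomposition~\eqref{eq:Cyy_blind} of $\Cyy$ really does diagonalise in the $\Uv$ basis, since it contains the cross-variance matrix (denoted $\Dv$ there) and, when $\theta_\xv\neq 0$, also a $(\Id_m\otimes\theta_\xv\tp)\Caa(\Id_m\otimes\theta_\xv\tp)\tp$ piece whose structure is not immediate. The cleanest route is to replace~\eqref{eq:Cyy_blind} by the compact identity $\Cyy = \Expect{}{\fop(\Cxx+\theta_\xv\theta_\xv\tp)\fop\tp} - \Theta_\fop\theta_\xv\theta_\xv\tp\Theta_\fop\tp + \beta\Id$ and then observe that, under~\ref{ass:fop_share_sing_vec} and the independence of the $\varsigma_i$, any $\Expect{}{\fop\Mv\fop\tp}$ with $\Vv\Mv\Vv\tp$ diagonal is itself diagonal in the $\Uv$ basis; the $\Cxx$-piece has this property by the source condition, while the $\theta_\xv\theta_\xv\tp$-piece produces off-diagonal contributions that cancel exactly against $\Theta_\fop\theta_\xv\theta_\xv\tp\Theta_\fop\tp$ thanks to $\Expect{}{\varsigma_i\varsigma_j}=\Expect{}{\varsigma_i}\Expect{}{\varsigma_j}$ for $i\neq j$, leaving only the diagonal $\Vari{\varsigma_i}(\Vv\theta_\xv)_i^2$ correction mentioned above.
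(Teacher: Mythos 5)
There is a genuine gap at the heart of your argument: the identity $\Lv_\xv^\lambda\yv - \xvc + \bv = \Lv_\xv^\lambda\pa{(\fop - \Theta_\fop)\xvc + \veps}$ is false. Expanding $\yv = \fop\xvc + \veps$ and $\bv = \pa{\Id - \Lv_\xv^\lambda\Theta_\fop}\theta_\xv$ gives
\[
\Lv_\xv^\lambda\yv - \xvc + \bv \;=\; \Lv_\xv^\lambda\veps \;+\; \Lv_\xv^\lambda\pa{\fop - \Theta_\fop}\xvc \;-\; \pa{\Id - \Lv_\xv^\lambda\Theta_\fop}\pa{\xvc - \theta_\xv},
\]
and you have silently dropped the last term (the discrepancy is present even when $\theta_\xv=0$). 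That term is the classical Tikhonov/shrinkage bias: it is centered, but its second moment equals, in the diagonal basis, $\sum_i\pa{1-\ell_i\Expect{}{\varsigma_i}}^2\Expect{}{\varsigma_i^2}^\alpha$, which is not negligible. A scalar sanity check exposes the issue: for a deterministic operator ($\Vari{\varsigma_i}=0$), $\beta>0$, $\lambda=0$, your ``exact identity'' gives the per-component risk $\ell_i^2\beta = \varsigma_i^2\sigma_{x,i}^4\beta/(\varsigma_i^2\sigma_{x,i}^2+\beta)^2$, whereas the true LMMSE risk is $\sigma_{x,i}^2\beta/(\varsigma_i^2\sigma_{x,i}^2+\beta)$; the difference is exactly the dropped $\pa{1-\ell_i\varsigma_i}^2\sigma_{x,i}^2$, and it is of the same order $(\beta+\lambda)^{\alpha/(\alpha+1)}$ as the term you keep (it dominates for small singular values). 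So your claimed identity undercounts the error and the proof as written does not establish the bound.

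The overall route is salvageable and is in fact the paper's: diagonalize via~\ref{ass:fop_share_sing_vec} and the source condition, reduce to scalar filters $\ell_i$, and bound scalar functions of $\Expect{}{\varsigma_i^2}$; your weighted AM--GM bound $\ell_i^2(\beta+\lambda)\le(\beta+\lambda)^{\alpha/(\alpha+1)}$ is a cleaner alternative to the paper's calculus-based maximization in Lemma~\ref{lemma:lip_L_lambda}. The fix is to keep the full per-component factor $\Expect{}{\pa{\ell_i\varsigma_i - 1}^2} = \ell_i^2\Vari{\varsigma_i} + \pa{\ell_i\Expect{}{\varsigma_i}-1}^2$ multiplying $\Expect{}{\varsigma_i^2}^\alpha$ (this is precisely what Lemma~\ref{lemma:approximation_error} computes); the extra piece is then controlled by maximizing $x\mapsto (\beta+\lambda)^2x^\alpha/(x^{\alpha+1}+\beta+\lambda)^2$, yielding a second contribution of order $(\beta+\lambda)^{\alpha/(\alpha+1)}$. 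Two secondary points: your treatment of the non-centered case invokes $\Expect{}{\varsigma_i\varsigma_j}=\Expect{}{\varsigma_i}\Expect{}{\varsigma_j}$ for $i\neq j$, i.e.\ mutual independence of the singular values, which is not part of~\ref{ass:fop_share_sing_vec}; and the claimed exact cancellation against $\Theta_\fop\theta_\xv\theta_\xv\tp\Theta_\fop\tp$ is the one place where the diagonalization could genuinely fail, so it needs to be written out rather than asserted.
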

\begin{proof}
See Section~\ref{subsec:proof_approx}.
\end{proof}

%

The first term in~\eqref{eq:approx_bound}, shows the convergence rate. It is very classical in standard (non-blind) settings and represents the effect that should be expected from the noise given the regularity of the problem.  In this bound, the regularization term $\lambda$ has a detrimental effect as it acts as a perturbation to the true LMMSE in~\eqref{eq:LMMSE-B}. The parameter $\alpha$ of the source condition dictates the rate of convergence with the power $\frac{\alpha}{\alpha+1}$, which gets better (for $\beta + \lambda \leq 1$) the bigger $\alpha$ is.

The second term, is unusual due to the blind setting. It encodes the impact of the randomness of the forward operator.In the non-blind setting, $\Vari{\varsigma_i} = 0$, which cancels the term and entails the classical non-blind results comprising of only the noise term, making~\eqref{eq:approx_bound}  a generalization. The role of $\alpha$ for this second term is different, as $\Expect{}{\varsigma_i^2}^{\alpha}$ can grow exponentially with $\alpha$. Differently from the non-blind case, there is therefore a trade-off between the two terms in the bound given by the $\alpha$ term, whereas usually  in the non blind case, the higher the $\alpha$, the better. Note that more precisely, such
term depends on $\frac{\Vari{\varsigma_i}}{\Expect{}{\varsigma_i^2}}$, where the the impact of the operator uncertainty is given by how fast the variance of each singular value changes with respect to its second moment. In essence, for large singular values, the effect of the variance will be reduced as the term will be dominated by the scaled magnitude $\Expect{}{\varsigma_i^2}^{\alpha}$, while for lower singular values, the effect of the variance will become more important and the dominating term will be $\frac{\Vari{\varsigma_i}}{\Expect{}{\varsigma_i^2}}$. The main limitation here is clearly the exponential effect of $\alpha$ on this second term, which very quickly makes it much bigger than the noise term, even for $\varsigma\approx 1$. This indicates that the main source of error in the this blind setting is due to the randomness of the operator rather than the noise.

Note that the term $\frac{\Vari{\varsigma_i}}{\Expect{}{\varsigma_i^2}}$ is equivalent to $\frac{\CV^2(\varsigma_i)}{1 + \CV^2(\varsigma_i)}$ where $\CV^2$ is the squared coefficient of variation of $\varsigma_i$, which provides a scale independent notion of spread of the distribution. Understanding the rate of convergence of this term that corresponds to understanding the decay of the the coefficient of variations of the singular values of the operator.


\subsection{Sampling bounds} \label{sec:variance_error}

Let us now turn to the estimation error appearing in the statement of Theorem~\ref{th:main_aprrox_and_sampling}. This error is due to the finite number sample average approximating $\Lv_\xv^\lambda$ using~\eqref{eq:approx_LMMSE}.  We state the main sampling theorem allowing to obtain with high probability regularized sample LMMSE estimator arbitrarly close to the LMMSE given enough samples:


\begin{theorem}[Sampling bound of LMMSE] \label{th:main_sampling_bound}
Assume  $\norm{\xv - \theta_\xv}^2\leq \rho_\xv$ and $\norm{\yv - \theta_\yv}^2\leq \rho_\yv$ almost surely with $\rho_\xv<+\infty$ and $\rho_\yv<+\infty$. Then, for any positive $\xi < \lambda+\gamma$, where $\gamma = \sigmin\pa{\Cyy} > 0$, and $d>0$, if the number of samples $N$ is such that
\begin{align}\label{eq:bound_N_to_alpha}
N > \log\pa{\frac{n+m}{d}}\frac{2\max\pa{\rho_\xv,\rho_\yv}\norm{\Cyy}\pa{3+2\xi}}{3\xi^2\norm{\Cxx}^2},
\end{align}
the following sammpling boun holds:
\begin{align}\label{eq:sampling_bound_main}
\Expect{\yv}{\norm{\Lv^\lambda_\xv\yv - \widehat{\Lv}^\lambda_\xv\yv}^2} \leq m\norm{\Cxy}^2\frac{\xi^2}{\gamma+\lambda}\pa{1 + \frac{\pa{1+\xi}^2}{\pa{\gamma+\lambda-\xi}^2} + \frac{2\pa{1+\xi}}{\pa{\gamma+\lambda-\xi}}}
\end{align}
with probability at least $1 - 2d$.
\end{theorem}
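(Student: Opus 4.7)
The plan is to combine a resolvent-type algebraic decomposition of $\Lv^\lambda_\xv-\widehat{\Lv}^\lambda_\xv$ with a matrix-Bernstein concentration inequality applied to the empirical covariances. Setting $A=\Cyy+\lambda\Id$ and $\widehat{A}=\widehat{\Cyy}+\lambda\Id$, the starting point is the identity
\[
\Lv^\lambda_\xv-\widehat{\Lv}^\lambda_\xv \;=\; (\Cxy-\widehat{\Cxy})\,A^{-1} \;+\; \widehat{\Cxy}\,A^{-1}(\widehat{\Cyy}-\Cyy)\,\widehat{A}^{-1},
\]
obtained by adding and subtracting $\widehat{\Cxy}A^{-1}$ and then invoking $A^{-1}-\widehat{A}^{-1}=A^{-1}(\widehat{A}-A)\widehat{A}^{-1}$. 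This cleanly isolates the two fluctuations to be controlled: $\widehat{\Cxy}-\Cxy$ and $\widehat{\Cyy}-\Cyy$.

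Both fluctuations are then controlled via a matrix-Bernstein inequality applied to the centered iid summands $(\xv_k-\theta_\xv)(\yv_k-\theta_\yv)^{\top}-\Cxy$ and $(\yv_k-\theta_\yv)(\yv_k-\theta_\yv)^{\top}-\Cyy$. The a.s.\ bounds $\|\xv-\theta_\xv\|^2\le\rho_\xv$ and $\|\yv-\theta_\yv\|^2\le\rho_\yv$ provide uniform operator-norm control on each summand, while Assumption~\ref{ass:Cxx_Cyy} bounds the variance proxy by $\max(\rho_\xv,\rho_\yv)\|\Cyy\|$. Calibrating the Bernstein tail with a deviation threshold proportional to $\xi\|\Cxx\|$ and failure probability $d$ for each event, then taking a union bound over the two events, yields precisely the sample-size requirement \eqref{eq:bound_N_to_alpha}; the $(3+2\xi)$ factor reflects the standard Bernstein denominator $\sigma^2+L\xi/3$. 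On the intersection of the two good events, Weyl's inequality together with $\xi<\gamma+\lambda$ gives $\|\widehat{A}^{-1}\|\le(\gamma+\lambda-\xi)^{-1}$, and the triangle inequality gives $\|\widehat{\Cxy}\|\le(1+\xi)\|\Cxy\|$ after the natural rescaling of $\xi$.

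Substituting these bounds into the decomposition and squaring yields an operator-norm estimate of the form
\[
\|\Lv^\lambda_\xv-\widehat{\Lv}^\lambda_\xv\|^2 \lesssim \frac{\xi^2\|\Cxy\|^2}{(\gamma+\lambda)^2}\Bigl(1+\tfrac{1+\xi}{\gamma+\lambda-\xi}\Bigr)^2,
\]
whose expansion matches the bracketed factor in \eqref{eq:sampling_bound_main}. The expectation over $\yv$ is then handled by a trace-of-covariance bound $\Expect{\yv}{\|M\yv\|^2}\le m\|M\|^2\|\Cyy\|$, producing the $m$ prefactor; one factor of $\|\Cyy\|$ is absorbed into the $(\gamma+\lambda)^{-2}$ to produce the denominator $(\gamma+\lambda)^{-1}$ appearing in the theorem. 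The main obstacle will be the careful calibration of constants in the matrix-Bernstein step so that the deviation threshold and variance proxy align exactly with the $\|\Cxx\|^2$ in the denominator and the $\|\Cyy\|$ in the numerator of \eqref{eq:bound_N_to_alpha}, in particular the question of whether the fluctuations are measured absolutely or relative to an ambient operator norm. The bookkeeping of the $(1+\xi)$ versus $\xi$ factors when bounding $\|\widehat{\Cxy}\|$ is a secondary delicate point, while the rest of the argument (resolvent identity, Weyl perturbation, trace bound) is routine.
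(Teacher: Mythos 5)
Your overall strategy is the same as the paper's: the resolvent decomposition $\Lv^\lambda_\xv-\Lvapp^\lambda_\xv=(\Cxy-\Cxyapp)\pa{\Cyy+\lambda\Id}\inv+\Cxyapp\pa{\Cyy+\lambda\Id}\inv(\Cyyapp-\Cyy)\pa{\Cyyapp+\lambda\Id}\inv$, matrix Bernstein for the two empirical fluctuations with a union bound, the perturbation bound $\norm{\pa{\Cyyapp+\lambda\Id}\inv}\leq(\gamma+\lambda-\xi)^{-1}$, and $\norm{\Cxyapp}\leq(1+\xi)\norm{\Cxy}$. However, there is a genuine gap in your final step. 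You propose to bound $\norm{\Lv^\lambda_\xv-\Lvapp^\lambda_\xv}^2$ in operator norm, which carries a factor $(\gamma+\lambda)^{-2}$, then pass to the expectation via $\Expect{\yv}{\norm{\Mv\yv}^2}\leq m\norm{\Mv}^2\norm{\Cyy}$ and ``absorb one factor of $\norm{\Cyy}$ into the $(\gamma+\lambda)^{-2}$.'' That absorption requires $\norm{\Cyy}\leq\gamma+\lambda$, which is false in general: $\gamma=\sigmin\pa{\Cyy}\leq\norm{\Cyy}$, so for ill-conditioned $\Cyy$ and small $\lambda$ your route yields a bound larger than \eqref{eq:sampling_bound_main} by a factor of order $\norm{\Cyy}/(\gamma+\lambda)$, i.e.\ essentially the condition number — precisely the regime the theorem is designed to handle.

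The paper avoids this by never separating $\norm{\Mv}^2$ from $\Cyy$. It writes $\Expect{\yv}{\norm{\Mv\yv}^2}=\Tr\pa{\Mv\tp\Mv\Cyy}$, expands the product using the decomposition above, and in every resulting term pairs the trailing $\Cyy$ with an adjacent resolvent via $\pa{\Cyy+\lambda\Id}\inv\Cyy\preceq\Id$ (and similarly bounds $\norm{\Cyy\pa{\pa{\Cyy+\lambda\Id}\inv-\pa{\Cyyapp+\lambda\Id}\inv}}$ by $\xi/(\gamma+\lambda-\xi)$ rather than by $\norm{\Cyy}$ times a resolvent norm). This exact cancellation is what leaves a single power of $(\gamma+\lambda)^{-1}$ out front. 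To repair your argument you must keep the trace form and exploit this operator ordering; the crude ``operator norm times $\norm{\Cyy}$'' route cannot recover the stated bound. The secondary issue you flag yourself — whether $\norm{\Cyyapp-\Cyy}\leq\xi$ or $\leq\xi\norm{\Cyy}$ feeds into the resolvent perturbation lemma — is real and also present in the paper's own bookkeeping, but it is a constant-calibration matter, not a structural one.
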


\begin{proof}
See Section~\ref{proof:sampling_bound}.
\end{proof}


This theorem entails several interpretations. It depends on three parameters, $\xi$, $\lambda$ and $d$, which can be balanced in various ways. The main choice is $\xi$ (and therefore an associated $\lambda$): smaller values of $\xi$ give sharper bounds, providing that $N$ satisfies condition~\eqref{eq:bound_N_to_alpha}, where the lower $\xi$, the harder the condition is. The parameter $d$ can be adjusted to relax this condition at the cost of a deterioration of the probabilities ensuring the validity of the theorem.

The parameter $\lambda$ also plays an important role. Indeed, $\lambda$ is crucial for ill-posed operators $\fop$ as $\gamma$ is close to zero: in this case it ensures that $\frac{1}{\gamma + \lambda}$ does not explode, which is necessary for the bound to be meaningful. However, $\lambda$ should be chosen carefully as it has a negative impact on the approximation bound previously discussed.


Note that in~\eqref{eq:sampling_bound_main}, the dependence to the number of samples $N$ is implicit. We thus provide a corollary to Theorem~\ref{th:main_sampling_bound} where we specialize the values of $\xi$ and $d$ making such dependence explicit.

\begin{corollary}\label{corollary:sampling_bound}
Let $K = \log\pa{n+m}\frac{4\max\pa{\rho_\xv,\rho_\yv}\norm{\Cyy}}{3\norm{\Cxx}^2}$ . Then if $N> \max\pa{K, \frac{16K}{\pa{\gamma + \lambda}^2}}$ and $\lambda+\gamma \geq 4\sqrt{\frac{K}{N}}$, we have that
\begin{align*}
\Expect{\yv}{\norm{\Lv^\lambda_\xv\yv - \widehat{\Lv}^\lambda_\xv\yv}^2} \leq \frac{mK}{N}\frac{9\norm{\Cxy}^2}{\pa{\gamma+\lambda}}\pa{1 + \frac{16}{\pa{\gamma+\lambda}^2} + \frac{8}{\gamma+\lambda}}
\end{align*}
with probability at least $1 - 2\pa{n+m}\inv$.
\end{corollary}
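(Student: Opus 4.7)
The plan is to derive the corollary as a direct specialization of Theorem~\ref{th:main_sampling_bound}, committing to specific values for the two free parameters $\xi$ and $d$ appearing there so that the clean conditions stated in the corollary recover the more general hypotheses of the parent theorem. First, I would fix $d = (n+m)^{-1}$ so that the probability $1-2d$ becomes $1-2(n+m)^{-1}$, as required. With this choice $\log\pa{(n+m)/d} = 2\log(n+m)$, and the sample-size condition of Theorem~\ref{th:main_sampling_bound} rewrites as
\[
N > 2\log(n+m)\,\frac{2\max(\rho_\xv,\rho_\yv)\norm{\Cyy}(3+2\xi)}{3\xi^2\norm{\Cxx}^2} \;=\; K\,\frac{3+2\xi}{\xi^2},
\]
with $K$ exactly as defined in the corollary.

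Second, I would set $\xi = 3\sqrt{K/N}$. This is engineered so that $\xi^2 N = 9K$ and the inequality above collapses to $9 > 3+2\xi$, i.e.\ $\xi < 3$, which is equivalent to $\sqrt{K/N}<1$, i.e.\ $N>K$ --- precisely the first hypothesis of the corollary. The second hypothesis $\lambda+\gamma \geq 4\sqrt{K/N}$ (which is equivalent to $N > 16K/(\gamma+\lambda)^2$, explaining the maximum in the hypothesis) then yields $\xi \leq \tfrac{3}{4}(\gamma+\lambda) < \gamma+\lambda$, ensuring admissibility in the parent theorem, and provides the key lower bound $\gamma+\lambda-\xi \geq \tfrac{1}{4}(\gamma+\lambda)$.

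Finally, I would substitute $\xi^2 = 9K/N$ into \eqref{eq:sampling_bound_main}. The leading factor becomes $m\cdot 9\norm{\Cxy}^2 K / [(\gamma+\lambda)N]$, matching the corollary exactly. The parenthetical factor $1 + (1+\xi)^2/(\gamma+\lambda-\xi)^2 + 2(1+\xi)/(\gamma+\lambda-\xi)$ is then bounded using $1/(\gamma+\lambda-\xi) \leq 4/(\gamma+\lambda)$, producing a quadratic contribution of order $16(1+\xi)^2/(\gamma+\lambda)^2$ and a linear contribution of order $8(1+\xi)/(\gamma+\lambda)$, which collapse to the claimed $1 + 16/(\gamma+\lambda)^2 + 8/(\gamma+\lambda)$ once the $(1+\xi)$ factors are absorbed into the numerical constants (noting $\xi\to 0$ in the asymptotic regime where the bound is informative). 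The only genuinely delicate step is this final constant-chasing, ensuring the numbers $9$, $16$ and $8$ line up with the choice $\xi = 3\sqrt{K/N}$ and the bound $\gamma+\lambda-\xi \geq (\gamma+\lambda)/4$; everything else is a mechanical substitution.
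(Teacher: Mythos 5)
Your overall strategy is exactly the paper's: specialize Theorem~\ref{th:main_sampling_bound} by fixing $d=(n+m)^{-1}$ (so that $\log\pa{(n+m)/d}=2\log(n+m)$ turns the sample-size condition into $N>K(3+2\xi)/\xi^2$) and taking $\xi$ proportional to $\sqrt{K/N}$. The only substantive difference is the constant: you take $\xi=3\sqrt{K/N}$, while the paper takes $\xi=2\sqrt{K/N}$ (a choice it makes explicit in the remark following the corollary). That difference is what breaks your last step. With $c=3$ the leading factor $m\norm{\Cxy}^2\xi^2/(\gamma+\lambda)$ equals the claimed $9m\norm{\Cxy}^2K/((\gamma+\lambda)N)$ with no slack left over, so the extra $(1+\xi)$ and $(1+\xi)^2$ factors in the parenthetical must genuinely disappear; your justification --- that they are ``absorbed into the numerical constants'' because $\xi\to 0$ asymptotically --- does not accomplish this. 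The corollary is a non-asymptotic statement, and under its hypotheses $N>K$ only guarantees $\xi<3$, so $(1+\xi)^2$ can be as large as $16$ and your bound would read $1+256/(\gamma+\lambda)^2+32/(\gamma+\lambda)$ rather than the claimed $1+16/(\gamma+\lambda)^2+8/(\gamma+\lambda)$. This is the one genuine gap.

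The paper's route avoids most of this difficulty: with $c=2$ one has $\xi^2=4K/N\le 9K/N$, so the leading constant $9$ retains slack, and the parenthetical is controlled by writing $(1+\xi)/(\gamma+\lambda-\xi)=(\gamma+\lambda)^{-1}(1+\xi)/(1-\xi/(\gamma+\lambda))\le 2(1+\xi)/(\gamma+\lambda)\le 4/(\gamma+\lambda)$, using $\xi/(\gamma+\lambda)\le 1/2$ (which follows from $\gamma+\lambda\ge 4\sqrt{K/N}=2\xi$) together with $\xi\le 1$. (Strictly speaking, $N>K$ with $c=2$ only gives $\xi<2$, so even the paper's appeal to $\xi\le 1$ implicitly needs $N\ge 4K$; but the point stands that a smaller $c$ leaves room to absorb the $(1+\xi)$ factors, whereas your choice does not.) To repair your argument you would need to either lower $c$, add a hypothesis forcing $\xi$ below a small constant, or accept larger numerical constants in the final bound.
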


\begin{proof}
See Section~\ref{proof:corollary_sampling}
\end{proof}

\textbf{Remark.} In the corollary, we chose to use the bound $\xi = c\sqrt{\frac{K}{N}}$ with $c=2$ which in turn requires $N>K$ to hold. In cases where $K$ is large with respect to $N$, one can choose to use bigger values of $c$ which constrain more $\lambda$ but reduce the condition on $N$ with respect to $K$. For example, taking $c=4$, we only need that $N>\frac{K}{26^2}$. This means that even for small number of samples (with respect to $K$), similar sampling bound can be achieved.

\textbf{Remark.} Note that to calculate $\widehat{\Lv}^\lambda_\xv$ we use the true first moments $\theta_\xv$ and $\theta_\yv$ of the distribution to compute our empirical covariances and not the empirical mean. We do this for the sake of clarifying our lemmas and theorems, without losing the crux of the result. Indeed, by the result of Lemma~\ref{lemma:covariance_error}, we see that the difference between what we do and estimating also the means decay as $O(1/N)$. However the error in estimating the covariance term decays as $O(1/\sqrt{N})$ (see Lemma~\ref{lemma:error_emp_covariance} and Lemma~\ref{lemma:error_emp_cross-covariance}), which means that for a reasonable number of samples, the error due to the shift in mean becomes negligible with regards to the covariance estimation. This practically means that in the case where one uses the empirical mean, all of our result would hold with slightly bigger multiplicative constants.




\section{Numerical Experiments}
\label{sec:numerics}

In this section we present a set of numerical experiments illustrating the
behaviour of the linear LMMSE estimators introduced in
Sections~\ref{sec:LLMSE_blind} and~\ref{sec:LLMSE_empirical} under uncertainty on both the data
distribution and the measurement operator.  
All simulations are performed on synthetic one-dimensional signals of
length $n$. For this, we first
provide in Section \ref{subsec:illustrative} several numerical details clarifying the statistics of the
synthetic signals and measurements generated throughout the numerical study.

As a first numerical test, we focus on the approximation error bound under spectral regularization derived in Theorem~\ref{th:error_bound_singular_values}, and verify numerically that the reconstruction error behaves in accordance with the source-condition–dependent constants.

As a second test, we perform numerical verifications of the results reported in Section~\ref{sec:variance_error}.  
In particular, we focus on the sampling-induced variance component of the LMMSE estimator and show numerically that the empirical operator $\hat{\Lv}_\lambda$ converges to its population counterpart $\Lv_\lambda$ at the predicted rate.


\subsection{Data Generation: Sinusoidal Signals with Uncertain Measurements}
\label{subsec:illustrative}

We begin by describing the synthetic data model used throughout the
numerical section.  
Each ground-truth signal $\xv=(x_1,x_2,\ldots,x_n) \in \mathbb{R}^n$ is generated as a
non-zero-mean sinusoid corrupted by additive Gaussian perturbations,
namely, for $i=1,\ldots,n$
\begin{equation} \label{}
x_i= b\sin\left(\frac{2\pi\,i}{n} \right) + z_i,
\end{equation}
where  $b>0$, and $\zv \sim \mathcal{N}(0,b
\Id_n)$ represents random variability across the signal family.  
In 
Figure~\ref{fig:data_generation1}, we report the exact and empirical signal estimated by a collection of $N=1000$ i.i.d.\ samples
$\{ \xv^j \}_{j=1}^N$. The shaded areas indicate $\theta_{\xv} \pm \hat{\sigma}_{\xv}$, where
$\hat{\sigma}_{\xv}$ denotes the square root of the empirical diagonal of
$\Cxx$ with $b=2.$ The corresponding (paired) measurements $\{ \yv^j \}_{j=1}^N$ are generated by convolving each sample $\xv^j$ with an
uncertain smoothing kernel and additive white Gaussian noise (AWGN). These kernels are generated as follows: let $\bar{\mathbf{k}} \in \mathbb{R}^n$ denote a fixed
reference kernel, chosen here as a normalized Gaussian one.  For each
realization $j=1,\ldots,N$ and each $i=1,\dots,n$, we draw an independent perturbation of the kernel mean,
modeled as
\[
\mathbf{k}^j_i = \bar{\mathbf{k}}_{i - \theta^j},
\qquad 
\theta^j \sim \mathcal{N}(0,\sigma_\theta^2),
\]
so that the effective kernel is shifted randomly along the signal
domain.  By incorporating also AWGN $\veps\sim\mathcal{N}(\mathbf{0},\beta\mathbf{I}_n)$, the resulting modelling process thus takes the form:
\[
\yv^j = \mathbf{k}^j\ast \xv^j + \veps
\]
and it can be employed to model uncertainty in the alignment or calibration of
measurement devices, depending on the parameters $\theta$ and $\sigma_n$ controlling the level of
operator uncertainty and measurement noise, respectively.

\begin{remark}[Computation of covariance with convolution matrices]
    In the setting of our experiments, the sampled random operators $\fop^j$ are
    convolution matrices.  It is therefore useful to make explicit how the
    general formulas in Sections~\ref{sec:LLMSE_blind} and
    \ref{sec:LLMSE_empirical} specialize in this structured case and how
    one can exploit this structure in numerical experiments.

    We thus consider $m=n$ and  $\fop\in\R^{n\times n}$ to be
    the matrix associated with a (vectorised) convolution kernel
    $\mathbf{k}\in\R^n$ (for instance, the vectorised form of a
    $d\times d$ kernel with $d^2=n$).  Under periodic boundary
    conditions, such matrices are block-circulant with circulant blocks
    (BCCB) and can be represented as linear combinations of ``shift''
    matrices.  More precisely, let $\{\mathbf{B}_i\}_{i=1}^n$ denote the
    family of BCCB matrices corresponding to a unit impulse located at
    position $i$, so that applying $\mathbf{B}_i$ amounts to circularly
    shifting the signal by $i$ pixels. Then, $\fop$ can
    be written as
    \[
        \fop = \sum_{i=1}^n k_i \mathbf{B}_i 
        =: T(\mathbf{k}),
    \]
    where $T:\R^n\to\R^{n\times n}$ is a linear map taking kernels as input and outputting 
    convolution matrices.

    Since $T$ is linear, it admits a matrix representation:  by stacking
    the coefficients of $\fop$ into a vector, we obtain
    \[
        \mathrm{vec}(\fop)
        \;=\;
        \mathbf{P}\,\mathbf{k},
        \qquad
        \mathbf{P}
        =
        \bigl[\mathrm{vec}(\mathbf{B}_1),\ldots,
              \mathrm{vec}(\mathbf{B}_n)\bigr]
        \in\R^{n^2\times n},
    \]
    that is, each column of $\mathbf{P}$ contains the vectorised version
    of one shift matrix.  This representation is particularly convenient
    when $\mathbf{k}$ is random as it allows to transfer knowledge on the distribution on $\mathbf{k}$ directly into the distribution of $\fop$.  Precisely, if $\mathbf{k}$ has mean
    $\theta_{\mathbf{k}}\in\R^n$ and covariance
    $\Cv_{\mathbf{k},\mathbf{k}}\in\R^{n\times n}
\in\R^{n\times n}$, then by
    linearity of expectation and covariance we obtain
    \[
        \mufop 
        = \Expect{\Av}{\Av} 
        = \Expect{\mathbf{k}}{T(\mathbf{k})} 
        = T\!\bigl(\Expect{\mathbf{k}}{\mathbf{k}}\bigr)
        = T(\theta_{\mathbf{k}}),
    \]
    so the mean operator is simply the convolution matrix associated with
    the average kernel.

    Furthermore, the covariance of $\fop$ is obtained by expanding
    \[
        \fop - \mufop
        = \sum_{i=1}^n (k_i - (\theta_{\mathbf{k}})_i)\,\mathbf{B}_i.
    \]
    Using bilinearity of the covariance and the fact that the
    $\mathbf{B}_i$ are deterministic, we obtain
    \[
        \Cv_{\fop,\fop}
        \;=\;
        \sum_{i=1}^n\sum_{j=1}^n 
        \Cv_{k_i,k_j}\,
        (\mathbf{B}_i \otimes \mathbf{B}_j),
    \]
    where each covariance entry $\Cv_{k_i,k_j}$ multiplies the
    appropriate shift-matrix pair.  Thus all second-order statistics of
    $\fop$ are fully determined by the statistics of the kernel
    $\mathbf{k}$ and the fixed matrices $\mathbf{B}_i$. From a computational perspective, this representation is extremely
    advantageous: once the shift matrices $\mathbf{B}_i$ are
    precomputed, updating $\mufop$ and $\Cv_{\fop,\fop}$ for different
    kernel distributions reduces to updating the much lower-dimensional
    kernel statistics $\theta_{\mathbf{k}}$ and
    $\Cv_{\mathbf{k},\mathbf{k}}$.
    \end{remark}

\medskip

Figures~\ref{fig:data_generation2} and \ref{fig:data_generation3}
illustrate exemplar realizations of $\yv$ along with samples of the
perturbed kernels respectively. In the following experiments, we use a Gaussian kernel $\bar{\mathbf{k}}$ of size $d=21$ with spread $\varsigma_{\bar{\mathbf{k}}}=0.5$ and kernel-shift variability $\sigma_\theta=0.4$, together with AWGN of level $\sigma_n=0.5$.
 For further illustration, we report in Figure \ref{fig:samples_sinusoidal}, four samples $\left\{ \xv^j\right\}_{j=1}^4$ and the corresponding  $\left\{ \yv^j\right\}_{j=1}^4$ generated as above to show dataset diversity.

\begin{figure}[t]
\centering
\begin{subfigure}{0.49\textwidth}
    \centering
    \includegraphics[width=\textwidth]{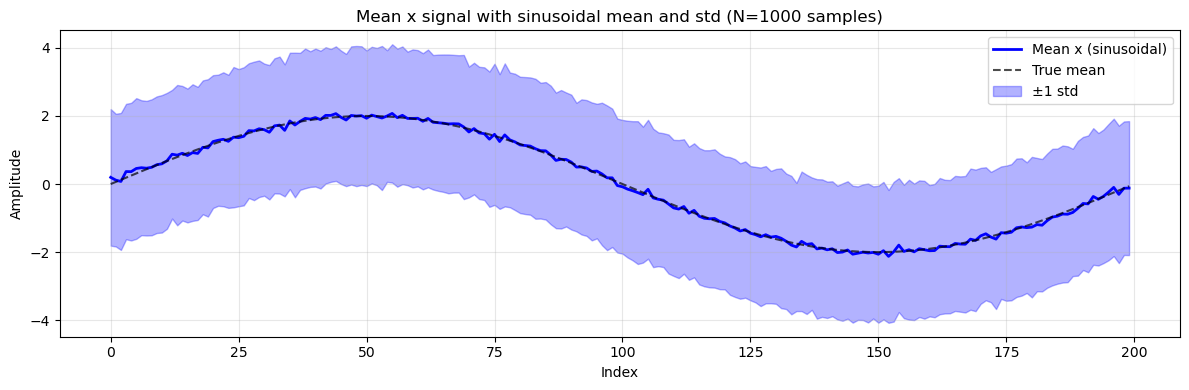}
    \caption{Reference signal statistics, $\xv$.}
    \label{fig:data_generation1}
\end{subfigure}
\begin{subfigure}{0.49\textwidth} 
    \centering
    \includegraphics[width=\textwidth]{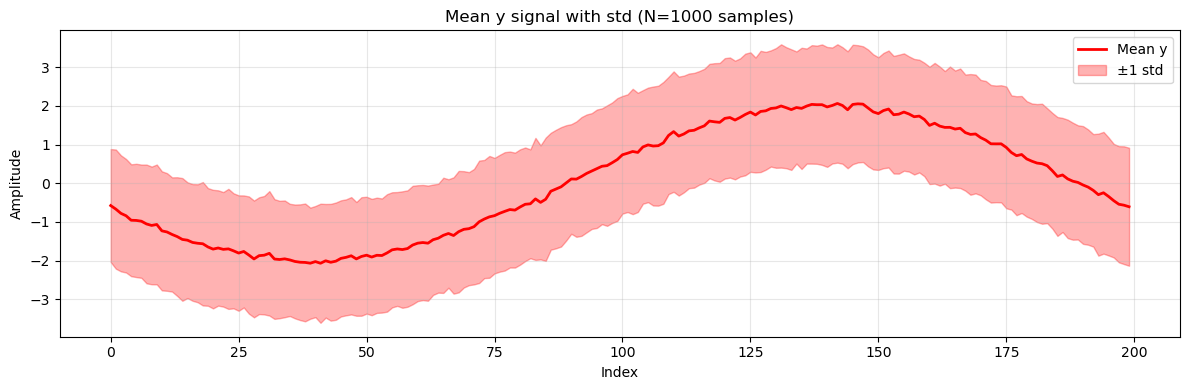}
    \caption{Observed signal statistics, $\yv$.}
    \label{fig:data_generation2}
\end{subfigure}

\begin{subfigure}{0.49\textwidth}
    \centering
    \includegraphics[width=\textwidth]{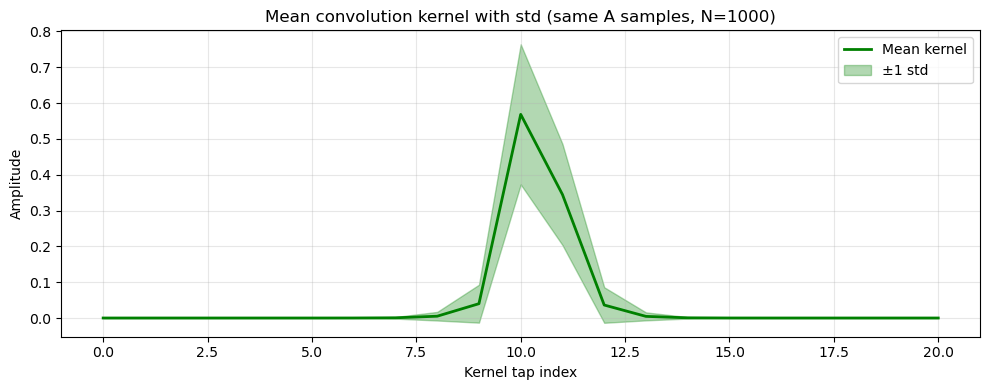}
    \caption{Convolution kernel statistics, $\mathbf{k}$.}
\label{fig:data_generation3}
\end{subfigure}
\caption{Illustrative synthetic dataset:  statistics of a sinusoidal signal $\xv$ (Figure \ref{fig:data_generation1}), observations obtained by convolution with Gaussian kernel (Figure \ref{fig:data_generation2}) and convolution kernels (Figure \ref{fig:data_generation3}) with uncertainties. For all quantities, empirical means are denoted by solid lines, together with $\pm$ one empirical standard deviation (shaded area) computed from $N=1000$ training samples.}
 \label{fig:data_generation}
\end{figure}

\begin{figure}[h!]
    \centering
    \begin{subfigure}{\textwidth}
        \centering
        \includegraphics[width=\textwidth]{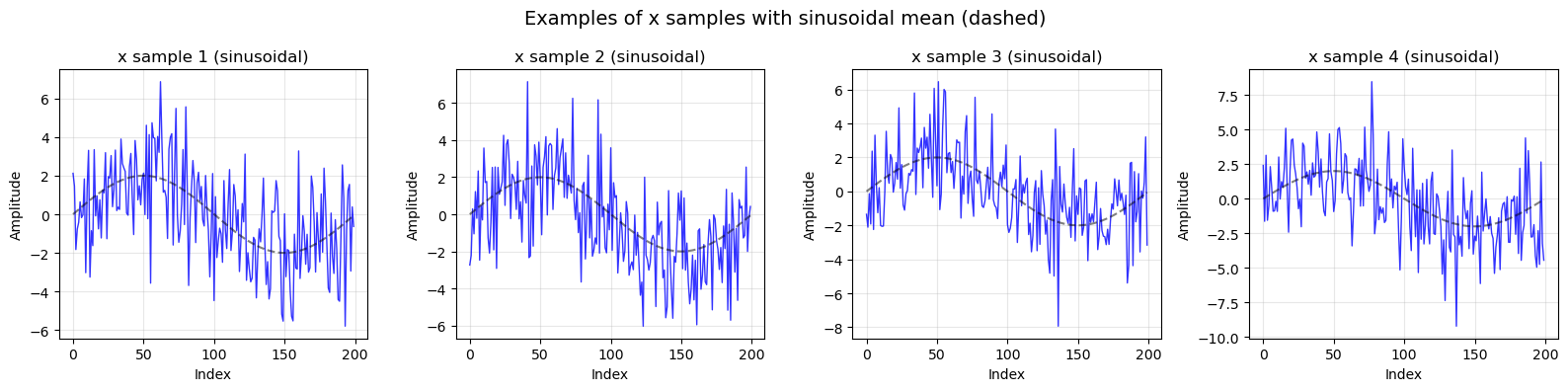}
    \end{subfigure}

    \begin{subfigure}{\textwidth}
        \centering
        \includegraphics[width=\textwidth]{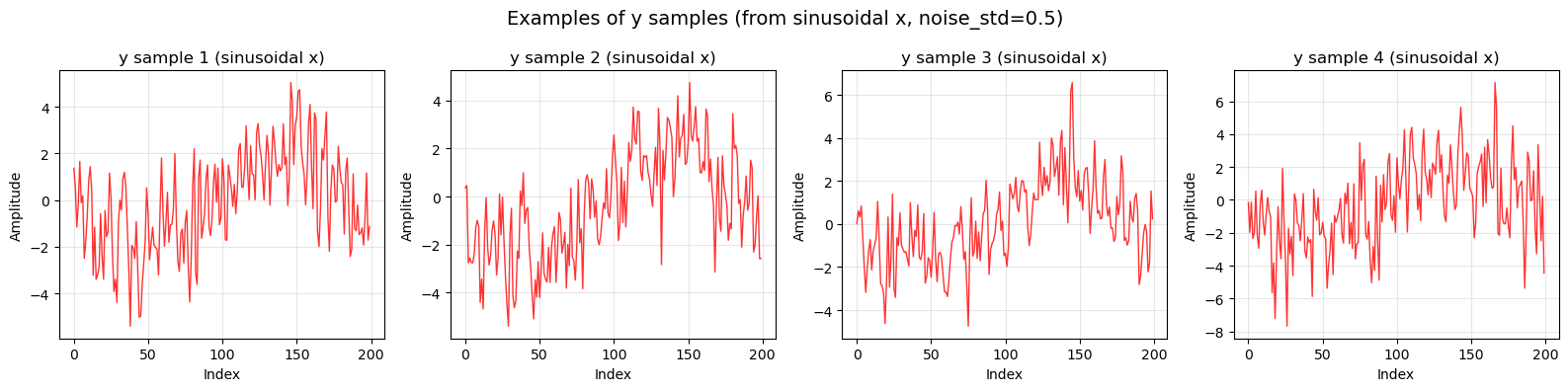}
    \end{subfigure}
    \caption{Samples of reference and observed (blurred + noisy) signals.}
    \label{fig:samples_sinusoidal}
\end{figure}

As a first sanity check, we validate the formula \eqref{eq:LMMSE-B} for the computation of the LMMSE estimator
$\hat{\xv}_{\text{LMMSE}}$ 
using the exact (i.e., non-sampled) statistical information on a test signal not included
in the training set, see Figure \ref{fig:LMMSE_signal_test}. This example highlights the qualitative behaviour of the estimator, and
illustrates the role played by the uncertainty on both $\xv$ and $\mathbf{k}$.

\begin{figure}[h!]
    \centering
    \includegraphics[width=0.8\textwidth]{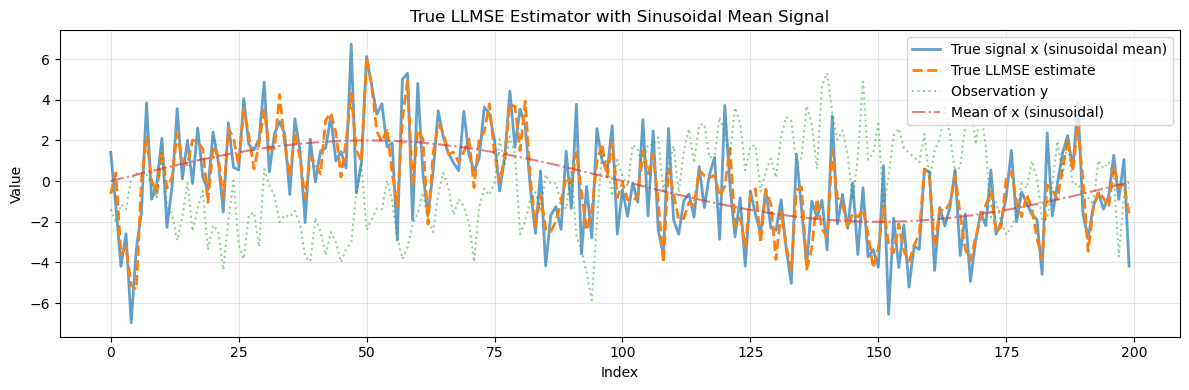}
    \caption{Reconstruction of a test signal using the theoretical LMMSE
    estimator. Ground truth (solid blue), sinusoidal mean (dotted red), measurement (dotted green), and
   theoretical-LMMSE reconstruction (dashed orange).}
    \label{fig:LMMSE_signal_test}
\end{figure}

\subsection{Numerical Verification of the Reconstruction Error}
\label{subsec:rec_error_approx_numerics}

We now turn to the numerical verification of the approximation error
bound established in Theorem~\ref{th:error_bound_singular_values} for the
spectrally regularized LMMSE estimator.  
In particular, we show here that this
bound is numerically meaningful and that its dependence on $\alpha$ and
on the operator variability encoded in the coefficient of variation matches the predicted behaviour. All experiments in this subsection reuse the data model introduced in
Section~\ref{subsec:illustrative}. The kernel parameters are  chosen as in
Section~\ref{subsec:illustrative}; they will be varied explicitly only
when studying the effect of kernel variability. To enforce the source condition of
Section~\ref{sec:LLMSE_blind}, we construct the prior
covariance of the signal $\xv$ as
\[
    \Cxx \;=\; \Expect{\fop}{\pa{\fop\tp\fop}}^\alpha,
\]
where the expectation is taken with respect to the distribution of the
random convolution operator $\fop$.  Numerically, we proceed as follows.
First, we generate $N$ i.i.d.\ realizations
$\{\Av^j\}_{j=1}^{N}$ of the convolution matrix, obtained
from independent draws $\{\mathbf{k}^j\}_{j=1}^{N}$.  We then form the empirical average
\[
\widehat{\Cv}_{\Av}
    \;\coloneqq\;
    \frac{1}{N} \sum_{j=1}^{N} (\Av^j)^\top (\Av^j),
\]
and define
\[
\Cxx:=\widehat{\Cv}_{\Av}^{\,\alpha},
\]
where the power is taken by raising the eigenvalues (or,
equivalently, the diagonal entries in the Fourier domain) of
$\widehat{\Cv}_{\Av}$ to the power $\alpha$.  Finally, we generate the
training signals from the Gaussian prior
\[
    \xv^j \sim \mathcal{N}(\theta_{\xv}, \Cxx),
    \qquad
    j = 1,\dots,N,
\]
where $\theta_{\xv}$ is the sinusoidal signal introduced in
Section~\ref{subsec:illustrative}.  
The corresponding
measurements are then generated as
\[
    \yv^j = \fop^j \xv^j + \veps^j,
    \qquad
    \veps^j \sim \mathcal{N}(\mathbf{0},\sigma_n^2 \Id_n),
\]
using independent draws $\fop^j$ of the convolution operator with the
same kernel statistics as above.

For a fixed $\alpha$ and a given configuration of kernel variability and
noise level, we compute the empirical quantities required
by the LMMSE formula and apply it on a test observation $\yv$, obtaining the regularized estimator
$\hat{\xv}_\alpha = \Lv_{\alpha}^\lambda \yv$ (see
Sections~\ref{sec:LLMSE_blind} and~\ref{sec:LLMSE_empirical}).  On the
test set, we thus evaluate the empirical reconstruction error
\begin{equation} \label{eq:LHS}
    \frac{1}{K} \sum_{k=1}^K
    \bigl\| \xv^k - \hat{\xv}^k_{\alpha} \bigr\|_2^2,
\end{equation}
and compare it with the right-hand side $\mathrm{RHS}(\alpha)$ provided
by the approximation-error theorem in
Section~\ref{sec:bias_error}.  The theoretical bound can be
decomposed into two main contributions,
\[
    \mathrm{RHS}(\alpha)
    \;=\;
    \mathrm{term}_1(\alpha) + \mathrm{term}_2(\alpha,\mathrm{CV}^2),
\]
where $\mathrm{term}_1(\alpha)$ depends on the source condition and
regularization parameters through explicit constants $C_1(\alpha)$ and
$C_2(\alpha)$, while $\mathrm{term}_2(\alpha,\mathrm{CV}^2)$ captures
the effect of the kernel variability via the coefficient of variation
squared $\mathrm{CV}^2$ of the width parameter $\sigma$ as discussed after theorem~\ref{th:main_aprrox_and_sampling}

\paragraph{Dependence on the source condition parameter $\alpha$.}
We first fix the kernel statistics to an intermediate regime (medium
noise and medium operator variability), as in
Section~\ref{subsec:sampling_bounds_numerics}, and vary the source
condition exponent in the set
\[
    \alpha \in \{0.5,\, 1.0,\, 1.3,\, 1.5,\, 1.8,\, 2.0\}.
\]
For each value of $\alpha$ we generate $N=1000$ training samples and
evaluate both \eqref{eq:LHS} and $\mathrm{RHS}(\alpha)$ on $K=20$
test signals.  Figure~\ref{fig:rec_error_alpha} reports both quantities
 in log scale as functions of $\alpha$. 
In all tested configurations, the bound \eqref{eq:approx_bound} is satisfied with a
moderate margin, and the dependence of the bound on $\alpha$ is
consistent with the theoretical prediction: the constants
$C_1(\alpha)$, $C_2(\alpha)$ and $C_3(\alpha)$ exhibit the expected
monotonic trends, and the tightness of the bound varies smoothly with
the smoothness encoded by the source condition.

\begin{figure}[h!]
        \centering
    \includegraphics[width=0.5\textwidth]{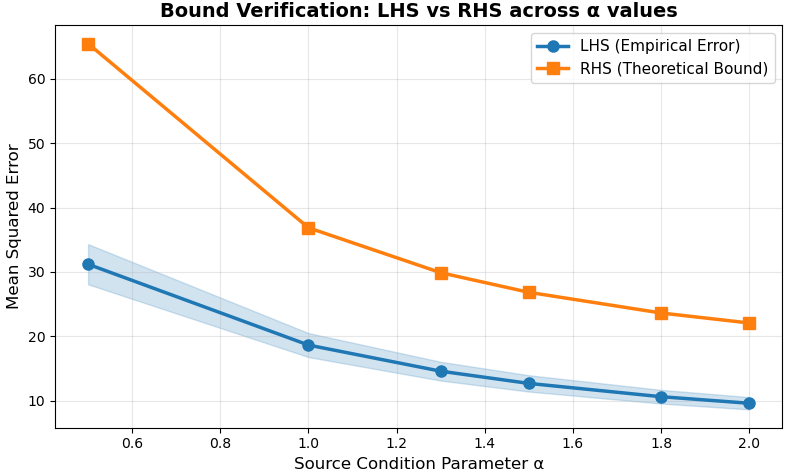}
        \caption{Numerical verification of the error bound \eqref{eq:approx_bound} as a
    function of the source condition parameter $\alpha$.}
    \label{fig:rec_error_alpha}
\end{figure}

\paragraph{Dependence on kernel variability and $\mathrm{CV}^2$.}
We next fix the source condition parameter to a representative value,
say $\alpha = 1.2$, and study the impact of the kernel variability on
the bound.  To this end, we vary the standard deviation
$\sigma_{\mathrm{std}}$ of the kernel width around its mean
$\mu_\sigma$, which in turn changes the coefficient of variation
$\mathrm{CV}(\sigma)$ and its square $\mathrm{CV}^2(\sigma)$ which we noted is directly linked to our bound in Theorem~\eqref{th:main_aprrox_and_sampling}.  For each
choice of $\sigma_{\mathrm{std}}$ we generate $N=1000$ training samples,
compute the corresponding LMMSE estimator, and evaluate the empirical
error \eqref{eq:LHS} and the theoretical right-hand side.  The results are summarized
in Figure~\ref{fig:rec_error_cv2}.

Consistently with the theoretical analysis, increasing the variability
of the kernels (i.e., larger $\sigma_{\mathrm{std}}$ and hence larger
$\mathrm{CV}^2(\sigma)$) leads to an increase of the second term in the
bound and, consequently, to a larger $\mathrm{RHS}$. 


\begin{figure}[h!]
    \centering
\includegraphics[width=\textwidth]{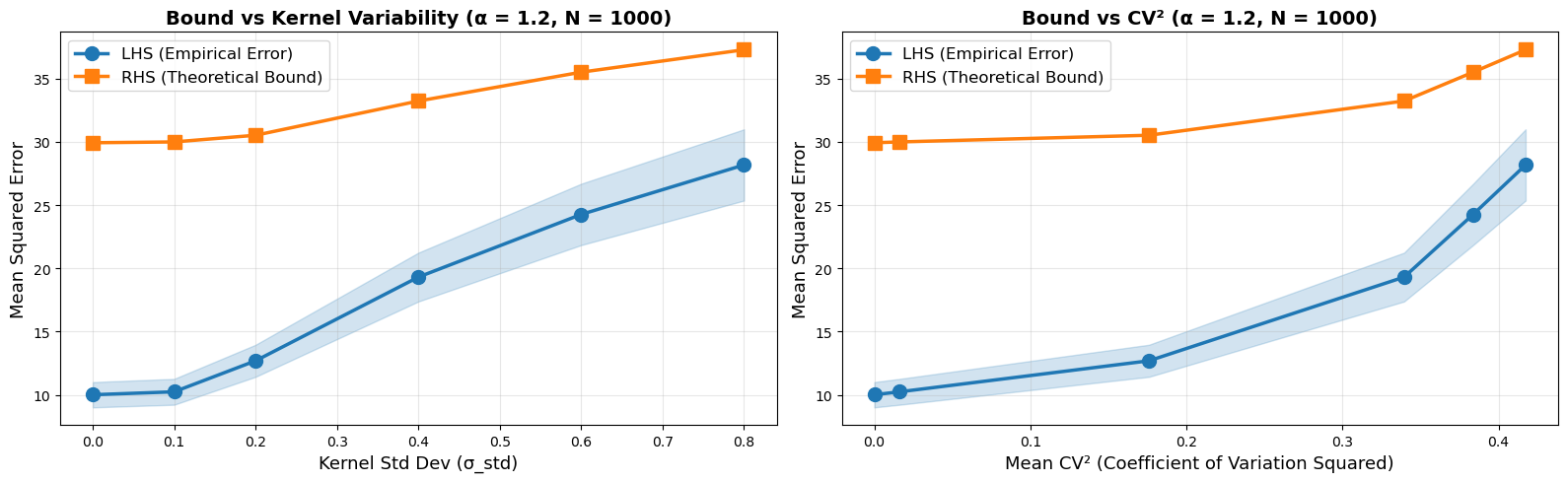}
   \caption{Numerical verification of the approximation error bound \eqref{eq:approx_bound} as a
    function of kernel variability, parameterized via the standard deviation $\sigma_{\mathrm{std}}$ (or equivalently $\mathrm{CV}^2$) of the kernel width.}

    \label{fig:rec_error_cv2}
\end{figure}

\subsection{Numerical Verification of the Sampling Bounds}
\label{subsec:sampling_bounds_numerics}

We now turn to a second numerical experiment, which examines the
convergence of the empirical LMMSE estimator $\hat{\Lv}_\lambda$
towards its population counterpart $\Lv_\lambda$ as the number of
training samples $N$ increases.  
The theoretical results of Section~\ref{sec:variance_error}, and in particular Theorem \ref{th:main_sampling_bound} and Corollary \ref{corollary:sampling_bound} predict
that, under the assumptions of our model, the reconstruction error
decays at rate $O(N^{-1})$ with $N$ denoting the sample size.

To validate this behaviour, we generate $K=20$ independent sinusoidal
signals as described in Section~\ref{subsec:illustrative}, each with
randomly perturbed amplitudes, phases, and baselines.  
For each signal, we consider three experimental regimes of varying difficulty:
\begin{enumerate}[label=(R\arabic*)]
    \item  medium AWGN level (medium $\sigma_n$) and average operator uncertainty (medium $\sigma_\theta$);
    \item high AWGN level (large $\sigma_n$)  and average operator uncertainty (medium $\sigma_\theta$);
    \item average AWGN level (medium $\sigma_n$) and high operator uncertainty (large $\sigma_\theta$).
\end{enumerate}
For each regime, we draw $N\in\left\{500, 1000, 1500, 2000, 2500, 3000, 3500, 4000\right\}$ training samples and compute the empirical LMMSE estimators $\hat{\Lv}^\lambda$
and
evaluate the sample reconstruction error
\[
    \text{err}_N 
    \;\coloneqq\;
    \frac{1}{N}\sum_{n=1}^N \big\| \hat{\Lv}^\lambda \yv^n - \Lv^\lambda \yv^n \big\|_2^2
\]
on the test signals, then averaging over $k$ experiments to study variability across experiments.

Figure~\ref{fig:convergence-LMMSE} reports the mean error $\text{err}_N $ as a function of $N$ for all
three regimes.  
All curves exhibit a decay consistent with the predicted
$1/N$ convergence rate of Theorem~\ref{th:main_sampling_bound} and Corollary~\ref{corollary:sampling_bound}.  
This numerical evidence confirms the robustness of the empirical LMMSE
approach under realistic levels of noise and operator variability. 

\begin{figure}[h!]
    \centering
    \includegraphics[height=6cm]{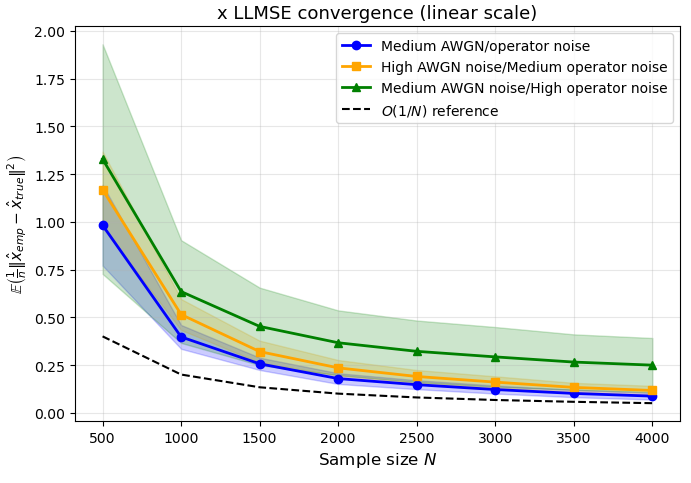}
    \caption{Convergence of the empirical LMMSE estimator: average
    reconstruction error $\text{err}_N$ vs.\ sample size $N$ for the three
    noise/uncertainty regimes described in the text. The shaded regions indicate empirical variability across the $K=20$ test signals.
 The observed slope
    is consistent with the predicted $1/N$ decay.}
    \label{fig:convergence-LMMSE}
\end{figure}

\section{Proofs}

\subsection{Proof of equivalence between LMMSE estimators and Tikhonov-regularized problems}
We prove the two lemmas that link the LMMSE estimators of $\xv$ and $\fop$ with a specific Tikhonov-regularized problems:

\subsubsection*{Proof of Lemma~\ref{th:LMMSE-TikhG}}\label{proof:lmmse-tikhG}
We consider~\eqref{eq:LMMSE-B} noticing we can formulate it as
\begin{align*}
\hat{\xv} = \Cxx\Theta_\fop\tp\pa{\Theta_\fop \Cxx \Theta_\fop\tp + \Cp}\inv \pa{\yv - \theta_\yv} + \mux.
\end{align*}
We use the Woodbury matrix identity to expand the inverse term, obtaining:
\[
\hat{\xv} = \Cxx\pa{\Theta_\fop\tp\Cp\inv - \Theta_\fop\tp\Cp\inv\Theta_\fop\pa{\Cxx\inv + \Theta_\fop\tp\Cp\inv\Theta_\fop}\inv\Theta_\fop\tp\Cp\inv}\pa{\yv - \theta_\yv}+ \mux.
\]
We can factor out $\Theta_\fop\tp\Cp\inv$ to the right side of the parenthesis:
\[
\hat{\xv} = \Cxx\pa{\Id - \Theta_\fop\tp\Cp\inv\Theta_\fop\pa{\Cxx\inv + \Theta_\fop\tp\Cp\inv\Theta_\fop}\inv}\Theta_\fop\tp\Cp\inv\pa{\yv - \theta_\yv}+ \mux.
\]
Using the identity $\Id - \mathbf{B}(\mathbf{A}+\mathbf{B})\inv = \mathbf{A}(\mathbf{A}+\mathbf{B})\inv$ with $\mathbf{A}=\Cxx\inv$ and $\mathbf{B}=\Theta_\fop\tp\Cp\inv\Theta_\fop$, the term in the brackets simplifies to $\Cxx\inv(\Cxx\inv + \Theta_\fop\tp\Cp\inv\Theta_\fop)\inv$. By using this, we can obtain:
\[
\hat{\xv} = \pa{\Cxx\inv + \Theta_\fop\tp\Cp^{-1}\Theta_\fop}\inv \Theta_\fop\tp\Cp\inv\pa{\yv - \theta_\yv}+ \mux
\]
which is the solution of~\eqref{eq:Tikh-sig}, proving our claim.

%

\subsubsection*{Proof of Lemma~\ref{th:LMMSE_fop_Tikh}}\label{proof:lmmse-tikh-op}
Let us define for notational purposes $\Gv = \pa{\Id_m \otimes \xv\tp}$. Let us recall that the covariance of $\yv$ is given by~\eqref{eq:Cyy_blind}:
\begin{align*}
\Cyy &= \theta_\fop\Cxx\theta_\fop\tp + \Theta_\Gv\Caa\Theta_\Gv\tp + \Dv + \Ceps.
\end{align*}
This means that we can start from the solution of the Tikhonov-regularized problem of Lemma~\ref{th:LMMSE_fop_Tikh} which is
\begin{align*}
\pa{\Theta_\Gv\pa{\theta_\fop\Cxx\theta_\fop\tp+\Dv+\Ceps}\inv\Theta_\Gv\tp + \Caa\inv}\Theta_\Gv\tp \pa{\theta_\fop\Cxx\theta_\fop\tp+\Dv+\Ceps}\inv\yv,
\end{align*}
and from there use Woodbury identity to simplify it and get back to the LMMSE estimator formulation. That is, we obatin
\begin{align*}
\pa{\Theta_\Gv\pa{\theta_\fop\Cxx\theta_\fop\tp+\Dv+\Ceps}\inv\Theta_\Gv\tp + \Caa\inv}\Theta_\Gv\tp \pa{\theta_\fop\Cxx\theta_\fop\tp+\Dv+\Ceps}\inv = \Cay\Cyy\inv
\end{align*}
where the right-hand side is the LMMSE to recover $\fop$ and the left hand-side is the solution of the Tikhonov problem~\eqref{eq:Tikh-sig}, which concludes the proof.

\subsection{Proofs for Theorem~\ref{th:error_bound_singular_values}}

We start by proving some lemmas that will be necessary for the proof Theorem~\ref{th:error_bound_singular_values}:

\begin{lemma}[Operator norm of $\Lv_\lambda$]\label{lemma:lip_L_lambda}
Assume~\ref{ass:fop_share_sing_vec} and assume that $\Cxx$ follows the source condition~\ref{def:source_cond_random_sv} with $\alpha\in\R_+^*$. Then, the norm of $\Lv^\lambda_\xv$ is bounded by:
\begin{align*}
\norm{\Lv^\lambda_\xv} \leq C\pa{\beta+\lambda}^{-\frac{1}{2(\alpha+1)}}
\end{align*}
with $C\in]0,1[$ depending on $\alpha$.
\end{lemma}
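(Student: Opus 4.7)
The plan is to exploit assumption~\ref{ass:fop_share_sing_vec} and the source condition of Definition~\ref{def:source_cond_random_sv} to simultaneously diagonalize every object appearing in $\Lv_\xv^\lambda = \Cxy(\Cyy+\lambda\Id)\inv$. Writing $\Theta_\fop = \Uv\Theta_\Dv\Vv$ with $\Theta_\Dv = \diag(\sigma_i)$ and $\sigma_i \coloneqq \Expect{}{\varsigma_i}$, and noting that $\Expect{}{\fop\tp\fop} = \Vv\tp\diag(\Expect{}{\varsigma_i^2})\Vv$, the source condition gives $\Cxx = \Vv\tp\diag(s_i^{2\alpha})\Vv$ with $s_i^2 \coloneqq \Expect{}{\varsigma_i^2}$. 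Consequently $\Cxy = \Cxx\Theta_\fop\tp = \Vv\tp\diag(s_i^{2\alpha}\sigma_i)\Uv\tp$ and $\Theta_\fop\Cxx\Theta_\fop\tp = \Uv\diag(\sigma_i^2 s_i^{2\alpha})\Uv\tp$, so all dominant pieces live in a common singular basis.

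Next, from the expression of $\Cyy$ in~\eqref{eq:Cyy_blind}, the two extra terms $(\Id_m\otimes\theta_\xv\tp)\Caa(\Id_m\otimes\theta_\xv\tp)\tp$ and $\Dv$ are positive semidefinite, and $\Ceps = \beta\Id$, so that $\Cyy+\lambda\Id \succeq \Theta_\fop\Cxx\Theta_\fop\tp + (\beta+\lambda)\Id$. Inverting reverses the inequality and therefore
\begin{align*}
\norm{\Lv_\xv^\lambda}^2 = \norm{\Cxy(\Cyy+\lambda\Id)^{-2}\Cxy\tp} \leq \norm{\Cxy\bigl(\Theta_\fop\Cxx\Theta_\fop\tp + (\beta+\lambda)\Id\bigr)^{-2}\Cxy\tp}.
\end{align*}
The right-hand side is now diagonal in the common $\Uv/\Vv$ basis, so its spectrum is explicit and
\begin{align*}
\norm{\Lv_\xv^\lambda}^2 \leq \max_{i}\; \frac{s_i^{4\alpha}\sigma_i^2}{\bigl(\sigma_i^2 s_i^{2\alpha} + \beta+\lambda\bigr)^2}.
\end{align*}

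The last step is a one-variable calculus optimisation. Using Jensen's inequality $\sigma_i^2 \leq s_i^2$ to absorb the discrepancy between first and second moments into a constant depending on the coefficients of variation $\Vari{\varsigma_i}/\sigma_i^2$, each term in the maximum is controlled by a function of the form $t^{r}/(t+c)^2$ with $r = (2\alpha+1)/(\alpha+1)$ and $c = \beta+\lambda$. A standard derivative computation locates the maximum at $t^{\ast} = (2\alpha+1)c$, where the value equals $C_\alpha\, c^{-1/(\alpha+1)}$ with $C_\alpha = (2\alpha+1)^{r}/(2\alpha+2)^2 \in (0,1)$. Taking square roots then yields the claimed bound $\norm{\Lv_\xv^\lambda} \leq C(\beta+\lambda)^{-(1/2)/(\alpha+1)}$.

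The main obstacle is controlling the gap between $\sigma_i$ and $s_i$ under~\ref{ass:fop_share_sing_vec}: if some coefficient of variation $\Vari{\varsigma_i}/\sigma_i^2$ were unbounded, the reduction to the deterministic-singular-value case would inflate the constant and push it outside $(0,1)$. The argument must therefore rely on $\Expect{}{\varsigma_i} > 0$ together with finite second moments to keep the resulting constant $C$ inside $(0,1)$ independently of $\beta$ and $\lambda$.
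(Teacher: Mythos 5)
Your overall strategy (diagonalize everything in the shared singular basis of \ref{ass:fop_share_sing_vec}, reduce to a scalar function of the spectrum, and optimize it by elementary calculus) is the same as the paper's, and your final calculus step is carried out correctly. The genuine gap is in the reduction: you discard the term $\Dv$ from $\Cyy$ and keep only $\Theta_\fop\Cxx\Theta_\fop\tp + (\beta+\lambda)\Id$ in the denominator. This replaces the diagonal entries $\Expect{}{\varsigma_i^2}^{\alpha+1}$ of $\Expect{}{\fop\Cxx\fop\tp}$ by the strictly smaller $\Expect{}{\varsigma_i}^2\Expect{}{\varsigma_i^2}^{\alpha}$, and the quantity you must then control is
\begin{align*}
\frac{\Expect{}{\varsigma_i^2}^{2\alpha}\,\Expect{}{\varsigma_i}^2}{\pa{\Expect{}{\varsigma_i}^2\Expect{}{\varsigma_i^2}^{\alpha}+\beta+\lambda}^2}.
\end{align*}
Your proposed passage to $t^r/(t+c)^2$ requires replacing $\Expect{}{\varsigma_i}^2$ by $\Expect{}{\varsigma_i^2}$ \emph{in the denominator}, which enlarges the denominator and is therefore not a valid upper bound. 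The obstruction you flag at the end is not cosmetic: the assumptions place no bound on the coefficient of variation, so the claim actually fails along this route. Take $\varsigma_i = p^{-1/2}$ with probability $p$ and $0$ otherwise, with $p=\beta+\lambda=c$; then $\Expect{}{\varsigma_i^2}=1$, $\Expect{}{\varsigma_i}^2=c$, assumption \ref{ass:fop_share_sing_vec} holds, and the displayed quantity equals $c/(2c)^2 = 1/(4c)$, which is incompatible with the target $C^2 c^{-1/(\alpha+1)}$ for any $\alpha>0$ as $c\to 0$.

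The fix is precisely to \emph{not} drop $\Dv$: under \ref{ass:fop_share_sing_vec} the full matrix $\Expect{}{\fop\Cxx\fop\tp} = \Theta_\fop\Cxx\Theta_\fop\tp + \Dv$ is itself diagonal in the shared basis with entries $\Expect{}{\varsigma_i^2}^{\alpha+1}$, so no operator-monotonicity argument or lower bound on $\Cyy$ is needed for that part. One is then left with $\Expect{}{\varsigma_i^2}^{\alpha}\Expect{}{\varsigma_i}\big/\pa{\Expect{}{\varsigma_i^2}^{\alpha+1}+\beta+\lambda}$, where Jensen ($\Expect{}{\varsigma_i}\le\Expect{}{\varsigma_i^2}^{1/2}$) is applied only in the numerator — the direction in which it is valid — and your one-variable optimization of $x^{\alpha+1/2}/(x^{\alpha+1}+\beta+\lambda)$ then delivers exactly the claimed rate $(\beta+\lambda)^{-\frac{1/2}{\alpha+1}}$. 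A secondary caveat: your assertion that $\Cyy+\lambda\Id \succeq \Bv$ implies $\norm{\Cxy(\Cyy+\lambda\Id)^{-2}\Cxy\tp}\le\norm{\Cxy\Bv^{-2}\Cxy\tp}$ uses operator monotonicity of $t\mapsto t^{-2}$, which does not hold in general; it is harmless only because the relevant matrices commute in the shared basis, and this should be said explicitly.
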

\begin{proof}
We start by using the fact that the random operator $\fop$ share fixed singular vectors with $\Cxx$ due to the source condition~\eqref{def:source_cond_random_sv}. To see this, let us define $\Expect{}{\fop} = \Uv\Dv_{\theta_\varsigma}\Vv$ with $\Dv_{\theta_\varsigma} \in \R^{m\times n}$ the diagonal matrix of expected singular values. From there we observe:
\begin{align*}
\Cxx = \Expect{}{\fop\tp\fop}^\alpha = \Expect{}{\Vv\tp\Dv\tp\Uv\tp\Uv\Dv\Vv}^\alpha = \Vv\tp\Expect{}{\Dv\tp\Dv}^\alpha\Vv = \Vv\tp\Dv_{\theta_{\varsigma^2}}^\alpha\Vv 
\end{align*}
where we define $\Dv_{\theta_{\varsigma^2}} = \Expect{}{\Dv\tp\Dv}$. Let us build on this and observe that:
\begin{align*}
\Expect{}{\fop\Cxx\fop\tp} = \Uv\Expect{}{\Dv\Dv_{\theta_{\varsigma^2}}^\alpha\Dv\tp}\Uv\tp =  \Uv\Dv_\Sigma\Uv\tp
\end{align*}
where we define $\Dv_\Sigma = \Expect{}{\Dv\Dv_{\theta_{\varsigma^2}}^\alpha\Dv\tp}$.
 We get:
\begin{align*}
\norm{\Lv^\lambda_\xv} &= \norm{\Cxx\Expect{}{\fop\tp}\pa{\Expect{}{\fop\Cxx\fop\tp} + \Ceps + \lambda\Id}\inv}\\
&= \norm{\Vv\tp\Dv_{\theta_{\varsigma^2}}^\alpha\Dv_{\theta_\varsigma}\tp\Uv\tp \pa{\Uv\Dv_\Sigma\Uv\tp + \pa{\beta+\lambda}\Id}\inv}\\
&= \norm{\Vv\tp \Dv_{\theta_{\varsigma^2}}^\alpha\Dv_{\theta_\varsigma}\tp \pa{\Dv_\Sigma + \pa{\beta+\lambda}\Id}\inv\Uv\tp }\\
&=\max_{i \in \{1\dots m\}} \pa{\Dv_{\theta_{\varsigma^2}}^\alpha\Dv_{\theta_\varsigma}\tp \pa{\Dv_\Sigma  + \pa{\beta+\lambda}\Id}\inv}_{i,i}.
\end{align*}
Furthermore, by definition, we have 
\[
\pa{\Dv_{\theta_{\varsigma^2}}^\alpha\Dv_{\theta_\varsigma}\tp }_{i,i} \leq \pa{\Dv_{\theta_{\varsigma^2}}^{\alpha+\frac{1}{2}}}_{i,i} \quad \text{ and } \quad \pa{\Dv_\Sigma}_{i,i} = \pa{\Dv_{\theta_{\varsigma^2}}^{\alpha+1}}_{i,i}\ \quad \text{ for } i\in\{1,\dots,m\}.
\]
Let us consider the following function:
\begin{align}\label{eq:trick_noise}
f(x) = \frac{x^{\alpha+1/2}}{x^{\alpha+1}+\beta+\lambda}.
\end{align}
where by construction we have $\norm{\Lv^\lambda_\xv} \leq \max_{x\in\R^+} f(x)$. In order to find the maximum of $f$, we look for $x^*$ such that $f'(x^*) = 0$. By standard calculus we have that
\begin{align*}
f'(x) = \frac{x^{\alpha}}{\pa{x^{\alpha+1}+\beta}^2} \pa{\pa{\alpha+1/2}\beta - x^{\alpha+1}},
\end{align*}  
which means that to maximize $f$ we need to take $x^* = \pa{\beta\pa{\alpha+1/2}}^{\frac{1}{\alpha+1}}$. Inserting this value into $f$ we get that:
\begin{align*}
\norm{\Lv^\lambda_\xv} \leq f(x^*) = \frac{\pa{\alpha+1/2}^{\frac{\alpha+1/2}{\alpha+1}}}{\alpha+1}\pa{\beta+\lambda}^{-\frac{1/2}{\alpha+1}}
\end{align*}
\end{proof}

\begin{lemma}\label{lemma:approximation_error}
Under the same assumptions as in Lemma~\ref{lemma:lip_L_lambda}, the $m$ first singular values $s_i$ for $i\in\{1,\dots,m\}$ of $\Expect{}{\pa{\Lv^\lambda_\xv\fop - \Id}\tp\pa{\Lv^\lambda_\xv\fop - \Id}}$ are bounded as follows:
\begin{align*}
\sum_{i=1}^m s_i \leq \sum_{i=1}^m \frac{\Vari{\varsigma_i}}{\Expect{}{\varsigma_i^2}} + \frac{(\beta+\lambda)^2}{\pa{\Expect{}{\varsigma_i^2}^{\alpha+1}+\beta+\lambda}^2}.
\end{align*}
\end{lemma}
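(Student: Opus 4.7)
The plan is to exploit assumption \ref{ass:fop_share_sing_vec} and the source condition to simultaneously diagonalize the entire expression in the shared basis $(\Uv,\Vv)$. Starting from the closed-form expression for $\Lv^\lambda_\xv$ already obtained in the proof of Lemma~\ref{lemma:lip_L_lambda},
\[
\Lv^\lambda_\xv = \Vv\tp \Dv_{\Expect{}{\varsigma^2}}^\alpha \Dv_{\theta_\varsigma}\tp \pa{\Dv_\Sigma + (\beta+\lambda)\Id}\inv \Uv\tp,
\]
I would multiply on the right by $\fop = \Uv\Dv\Vv$; the factor $\Uv\tp\Uv$ cancels and the product of diagonal matrices is again diagonal, so $\Lv^\lambda_\xv\fop = \Vv\tp \widetilde{\Mv} \Vv$, where $\widetilde{\Mv}$ is the $n\times n$ diagonal matrix with entries
\[
\widetilde{\Mv}_{i,i} = \frac{\Expect{}{\varsigma_i^2}^\alpha\,\theta_{\varsigma_i}\,\varsigma_i}{\Expect{}{\varsigma_i^2}^{\alpha+1} + \beta + \lambda},\qquad i=1,\dots,m,
\]
and $\widetilde{\Mv}_{i,i} = 0$ for $i>m$ (these trailing entries come from the zero rows/columns of the rectangular diagonal $\Dv$).

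Since $\Vv$ is deterministic and orthonormal, the matrix $\Expect{}{\pa{\Lv^\lambda_\xv\fop - \Id}\tp\pa{\Lv^\lambda_\xv\fop - \Id}} = \Vv\tp \Expect{}{\pa{\widetilde{\Mv} - \Id}^2} \Vv$ is orthogonally diagonalized by $\Vv$; being symmetric positive semidefinite, its singular values coincide with its eigenvalues. These split into $n-m$ trivial eigenvalues equal to $1$ (from the indices $i>m$) and $m$ nontrivial ones, which are precisely the quantities $s_i$ singled out in the statement:
\[
s_i = \Expect{}{\pa{\frac{\Expect{}{\varsigma_i^2}^\alpha\,\theta_{\varsigma_i}\,\varsigma_i}{\Expect{}{\varsigma_i^2}^{\alpha+1} + \beta + \lambda} - 1}^{2}}.
\]
Setting $c_i = \Expect{}{\varsigma_i^2}^\alpha$, $d_i = \Expect{}{\varsigma_i^2}$, $b = \beta+\lambda$, expanding the square, and using $\Expect{}{\varsigma_i}=\theta_{\varsigma_i}$ together with $\theta_{\varsigma_i}^2 = d_i - \Vari{\varsigma_i}$, a short computation in which all cross terms collapse into a multiple of the variance yields
\[
s_i = \frac{(c_i^2 d_i + 2 b c_i)\,\Vari{\varsigma_i} + b^2}{(c_i d_i + b)^2}.
\]

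The last step is a purely algebraic inequality: to conclude it suffices to verify
\[
\frac{c_i^2 d_i + 2 b c_i}{(c_i d_i + b)^2} \;\leq\; \frac{1}{d_i},
\]
which, upon clearing denominators (all positive since $c_i,d_i,b>0$), reduces to $c_i^2 d_i^2 + 2 b c_i d_i \leq c_i^2 d_i^2 + 2 b c_i d_i + b^2$, i.e.\ the trivial $0 \leq b^2$. Inserting this back into the expression for $s_i$ gives the per-index bound $s_i \leq \Vari{\varsigma_i}/\Expect{}{\varsigma_i^2} + b^2/(\Expect{}{\varsigma_i^2}^{\alpha+1} + \beta+\lambda)^2$, and summing over $i=1,\dots,m$ yields the claim. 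I do not anticipate any genuine obstacle; the only delicate bookkeeping is carefully matching the rectangular dimensions of $\Dv$, $\Dv_{\theta_\varsigma}\tp$, and $\Dv_\Sigma$ when evaluating $\Lv^\lambda_\xv\fop$ at the first step.
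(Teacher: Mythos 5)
Your proposal is correct and follows essentially the same route as the paper's proof: the same diagonalization of $\Lv^\lambda_\xv\fop-\Id$ in the shared basis $(\Uv,\Vv)$, the same exact expression for $s_i$ as an expectation of a squared diagonal term, and the same algebraic inequality $c_i^2d_i^2+2bc_id_i\leq(c_id_i+b)^2$ to isolate the variance contribution. The only differences are notational (your $c_i,d_i,b$ shorthand) and that you make explicit the per-index bound before summing, which the paper does implicitly.
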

\begin{proof}
Let us start by observing that, by using the notation defined in Lemma~\ref{lemma:lip_L_lambda}, we have:
\[
\Lv^\lambda_\xv\fop - \Id = \Vv\tp\pa{\Dv_{\Expect{}{\varsigma_i^2}^{\alpha}} \Dv_{\theta_\varsigma} \pa{\Dv_\Sigma + \pa{\beta+\lambda}\Id}\inv\Dv - \Id}\Vv.
\]

Thus, for the $i$-th singular value $s_i$, $i=1,\dots,m$  (as the others $n-m$ ones will be $-1$ since the diagonal matrices $\Dv,\Dv_{\Expect{}{\varsigma_i^2}^{\alpha}}, \Dv_{\theta_\varsigma}$ and $\Dv_\Sigma$ have at most $m$ entries), we observe by the definition of each of the diagonal matrices that it can be written as:
\begin{align*}
s_i &= \Expect{}{\pa{\frac{\Expect{}{\varsigma_i^2}^\alpha\Expect{}{\varsigma_i}\varsigma_i}{\Expect{}{\varsigma_i^2}^{\alpha+1}+\beta+\lambda} - 1}^2}\\
&= \frac{\Expect{}{\varsigma_i^2}^{2\alpha+1} \Expect{}{\varsigma_i}^2}{\pa{\Expect{}{\varsigma_i^2}^{\alpha+1}+\beta+\lambda}^2} - 2\frac{\Expect{}{\varsigma_i^2}^{\alpha} \Expect{}{\varsigma_i}^2}{\Expect{}{\varsigma_i^2}^{\alpha+1}+\beta+\lambda} + 1
\end{align*}
We can simplify this expression to get 
\begin{align*}
s_i = 1 - \frac{\Expect{}{\varsigma_i^2}^\alpha\Expect{}{\varsigma_i}^2\pa{\Expect{}{\varsigma_i^2}^{\alpha+1} + 2(\beta + \lambda)}}{\pa{\Expect{}{\varsigma_i^2}^{\alpha+1}+\beta+\lambda}^2}.
\end{align*} 
Using  assumption~\ref{ass:fop_share_sing_vec} we have that $\Expect{}{\varsigma_i^2} > 0$. We can thus write equivalently
\begin{align*}
s_i &= 1 - \frac{\Expect{}{\varsigma_i^2}^{\alpha+1} \pa{\Expect{}{\varsigma_i^2}^{\alpha+1} + 2(\beta + \lambda)} \frac{\Expect{}{\varsigma_i}^2}{\Expect{}{\varsigma_i^2}}}{\pa{\Expect{}{\varsigma_i^2}^{\alpha+1}+\beta+\lambda}^2}\\
&= \frac{\pa{\Expect{}{\varsigma_i^2}^{2\alpha+2} + 2(\beta+\lambda)\Expect{}{\varsigma_i^2}^{\alpha+1}} \pa{1 - \frac{\Expect{}{\varsigma_i}^2}{\Expect{}{\varsigma_i^2}} } + (\beta + \lambda)^2}{\pa{\Expect{}{\varsigma_i^2}^{\alpha+1}+\beta+\lambda}^2}.
\end{align*}
From here, we use that $\Expect{}{\varsigma_i^2}^{2\alpha+2} + 2(\beta+\lambda)\Expect{}{\varsigma_i^2}^{\alpha+1} \leq \pa{\Expect{}{\varsigma_i^2}^{\alpha+1}+\beta+\lambda}^2$, alongside $\pa{1 - \frac{\Expect{}{\varsigma_i}^2}{\Expect{}{\varsigma_i^2}} } = \frac{\Vari{\varsigma_i}}{\Expect{}{\varsigma_i^2}}$ which gives the desired result:
\begin{align*}
\sum_{i=1}^m s_i \leq \sum_{i=1}^m \frac{\Vari{\varsigma_i}}{\Expect{}{\varsigma_i^2}} + \frac{(\beta+\lambda)^2}{\pa{\Expect{}{\varsigma_i^2}^{\alpha+1}+\beta+\lambda}^2}.
\end{align*}

\end{proof}

\subsubsection{Proof of Theorem~\ref{th:error_bound_singular_values}}\label{subsec:proof_approx}

Our proof follows the same basic structure as for the deterministic case, that is: we first separate the error in two terms, one for the noise error and one for the approximation error, and then we use the source condition to show that these errors are bounded.
 
\textbf{Step 1: Separate the noise and the approximation error.} We start by seeing that thanks to the independence between $\xv$ and $\veps$, we can split the error into an approximation part and a noise part. We have that:

\begin{align*}
\Expect{}{\norm{\Lv^\lambda_\xv\yv - \xvc + \bv}^2} &= \Expect{}{\norm{\Lv^\lambda_\xv\veps + \pa{\Lv^\lambda_\xv\fop - \Id}\xvc + \bv}^2}\nonumber \\
&=\Expect{}{\norm{\Lv^\lambda_\xv\veps}^2} + \Expect{}{\norm{\pa{\Lv^\lambda_\xv\fop - \Id}\xvc + \bv}^2} 
\end{align*}
where use that $\veps$ is centered to cancel the other cross terms depending on it. Now we can recall from the~\eqref{eq:LMMSE} formula that the bias term is chosen as $\bv = \Expect{}{\xv} - \Lv^\lambda_\xv\Expect{}{\yv}=\pa{\Id - \Lv^\lambda_\xv\Theta_\fop}\theta_\xv$ which gives
\begin{align*}
\Expect{}{\norm{\Lv^\lambda_\xv\yv - \xvc + \bv}^2} &= \Expect{}{\norm{\Lv^\lambda_\xv\veps}^2} + \Expect{}{\norm{\pa{\Lv^\lambda_\xv\fop - \Id}\xvc}^2} + \norm{\bv}^2 + \Expect{}{\bv\tp\pa{\Lv^\lambda_\xv\fop - \Id}\xvc} \\
&\qquad + \Expect{}{\xvc\tp\pa{\Lv^\lambda_\xv\fop - \Id}\tp\bv}\\
&= \Expect{}{\norm{\Lv^\lambda_\xv\veps}^2} + \Expect{}{\norm{\pa{\Lv^\lambda_\xv\fop - \Id}\xv}} - \norm{\bv}^2
\end{align*}


\textbf{Step 2: Bound the noise error.} We can now focus on the noise term to see that
\begin{align}\label{eq:noise_decomp_random_sv}
\Expect{}{\norm{\Lv^\lambda_\xv\veps}^2} &\leq \norm{\Lv^\lambda_\xv}^2 \Expect{}{\norm{\veps}^2} \leq \norm{\Lv^\lambda_\xv}^2 \Expect{}{\veps\tp\veps} = \norm{\Lv^\lambda_\xv}^2\Tr\pa{\Ceps} = m\beta\norm{\Lv^\lambda_\xv}^2.
\end{align}

By Lemma~\ref{lemma:lip_L_lambda}, we have that
\begin{align*}
\norm{\Lv^\lambda_\xv} \leq C_1\pa{\beta+\lambda}^{-\frac{1/2}{\alpha+1}}
\end{align*}
Now combining with~\eqref{eq:noise_decomp_random_sv}, we have that
\begin{align*}
\Expect{}{\norm{\Lv^\lambda_\xv\veps}^2} \leq m \widetilde{C}_1 \beta\pa{\beta+\lambda}^{-\frac{1}{\alpha+1}} \leq m \widetilde{C}_1 \pa{\beta+\lambda}^{\frac{\alpha}{\alpha+1}}
\end{align*}
where $\widetilde{C}_1 \in ]0,1[$.
 
\textbf{Step 3: Bound the approximation error.} Let us look at the approximation error term. We start by observing by the so-called trace trick we have:
\begin{align*}
\Expect{}{\norm{\pa{\Lv^\lambda_\xv\fop - \Id}\xvc}^2} &= \Expect{}{\xvc\tp\pa{\Lv^\lambda_\xv\fop - \Id}\tp\pa{\Lv^\lambda_\xv\fop - \Id}\xvc} \nonumber \\
&= \Expect{}{\Tr\pa{\xvc\tp\pa{\Lv^\lambda_\xv\fop - \Id}\tp\pa{\Lv^\lambda_\xv\fop - \Id}\xvc}}.\nonumber\\
\end{align*}
Furthermore, by the linearity of expectation and the cyclical property of the trace we get:
\begin{align*}
\Expect{}{\norm{\pa{\Lv^\lambda_\xv\fop - \Id}\xvc}^2} &= \Tr\pa{\Expect{\xvc}{\xvc\xvc\tp}\Expect{\fop}{\pa{\Lv^\lambda_\xv\fop - \Id}\tp\pa{\Lv^\lambda_\xv\fop - \Id}}}\nonumber \\
\end{align*}

Now we can use that $\Expect{\xvc}{\xvc\xvc\tp} = \Cxx + \theta_\xv\theta_\xv\tp$ and see by trace properties that:
\begin{align*}
\Tr\pa{\theta_\xv\theta_\xv\tp\Expect{\fop}{\pa{\Lv^\lambda_\xv\fop - \Id}\tp\pa{\Lv^\lambda_\xv\fop - \Id}}} &= \Expect{}{\Tr\pa{\theta_\xv\tp\pa{\Lv^\lambda_\xv\fop - \Id}\tp\pa{\Lv^\lambda_\xv\fop - \Id}\theta_\xv}}\\
&= \Expect{}{\norm{\bv}^2},
\end{align*}
which means:
\begin{align}\label{eq:appprox_decomp_random_sv}
\Expect{}{\norm{\pa{\Lv^\lambda_\xv\fop - \Id}\xvc}^2} - \Expect{}{\norm{\bv}^2} = \Tr\pa{\Cxx\Expect{}{\pa{\Lv^\lambda_\xv\fop - \Id}\tp\pa{\Lv^\lambda_\xv\fop - \Id}}}.
\end{align}
As only the first $m$ diagonal entries of $\Dv_{\Expect{}{\varsigma_i^2}^\alpha}$ are non-zeros, we can use Lemma~\ref{lemma:approximation_error}, from which we get:
\begin{align*}
\Expect{}{\norm{\pa{\Lv^\lambda_\xv\fop - \Id}\xvc}^2} - \Expect{}{\norm{\bv}^2} &\leq  \sum_{i=1}^m \Expect{}{\varsigma_i^2}^\alpha\frac{\Vari{\varsigma_i}}{\Expect{}{\varsigma_i^2}} + \frac{\Expect{}{\varsigma_i^2}^\alpha\beta_\lambda^2}{\pa{\Expect{}{\varsigma_i^2}^{\alpha+1}+\beta_\lambda}^2}
\end{align*}

%

The last step consists in bounding the second term in the sum. For this, we can do the same analysis as used in~\eqref{eq:trick_noise} find an upper bound on this last expression. We define the function
\begin{align*}
f(z) = \frac{\beta_\lambda^2 z^{\alpha}}{\pa{z^{\alpha+1} + \beta_\lambda}^2}
\end{align*}
and obtain by standard calculus that the derivative of $f$ is given
\begin{align*}
f'(z) = \beta_\lambda^2\frac{z^{\alpha-1}\pa{\beta\alpha - \pa{\alpha+2}z^{\alpha+1}}}{\pa{z^{\alpha+1} + \beta_\lambda}^3},
\end{align*}
which vanishes for $z^* = \pa{\frac{\alpha\beta_\lambda}{\alpha+2}}^{\frac{1}{\alpha+1}}$. plugging into $f$ we get
\begin{align*}
f(z^*)  = \pa{\frac{\alpha}{\alpha+2}}^{\frac{\alpha}{\alpha+1}} \pa{1+\frac{\alpha}{\alpha+2}}^{-2}\beta^{\frac{\alpha}{\alpha+1}} \leq C_2 \beta^{\frac{\alpha}{\alpha+1}},
\end{align*}
where $C_2\in ]0,1[$.
This means that 

\begin{align*}
\Expect{}{\norm{\pa{\Lv^\lambda_\xv\fop - \Id}\xvc}^2} - \Expect{}{\norm{\bv}^2} &< \sum_{i=1}^m C_2\beta^{\frac{\alpha}{\alpha+1}} + \Expect{}{\varsigma_i^2}^{\alpha}\frac{\Vari{\varsigma_i}}{\Expect{}{\varsigma_i^2}}.
\end{align*}

Let us finally put everything together to obtain
\begin{align*}
\Expect{}{\norm{\Lv^\lambda_\xv\yv - \xvc + \bv}^2} \leq m\pa{C_1 + C_2} \beta_\lambda^{\frac{\alpha}{\alpha+1}} +  \sum_i^m \Expect{}{\varsigma_i^2}^{\alpha}\frac{\Vari{\varsigma_i}}{\Expect{}{\varsigma_i^2}},
 \end{align*} 
 which gives the claim.

\subsection{Proofs of Theorem~\ref{th:main_sampling_bound}}

We start by giving some useful lemmas that will be of use for the proof of the theorem. We start with a lemma that bounds the Lipschitz constant of operators of LMMSE-type.
\begin{lemma}[MSE of LMMSE operators]\label{lemma:Lipschitz-LMMSE}
Consider the regularized LMMSE operator $\Lv^\lambda_\xv = \Cxy\pa{\Cyy + \lambda\Id}\inv$ and the sample one $\Lvapp^\lambda_\xv = \Cxyapp\pa{\Cyyapp+\lambda\Id}\inv$ with $\lambda \geq 0$ where both $\Cyy$ and $\Cyyapp$ are symmetric positive definite matrices. We define $\Delta_{\Cyy\inv} = \pa{\Cyy+\lambda\Id}\inv - \pa{\Cyyapp+\Id}\inv$, $\delta_{\Cyy\inv} = \norm{\Delta_{\Cyy\inv}}$ and $\delta_{\Cxy} = \norm{\Cxy - \Cxyapp}$. Then we have:
\begin{align*}
\Expect{}{\norm{\Lv_\xv^\lambda\yv - \widehat{\Lv}_\xv^\lambda\yv}^2} &=  \frac{1}{m}\Tr\pa{(\Lv^\lambda_\xv - \Lvapp^\lambda_\xv)\tp(\Lv^\lambda_\xv - \Lvapp^\lambda_\xv)\Cyy}\\
 &\leq \norm{\pa{\Cyy+\lambda}^{-1}} \delta_{\Cxy}^2 + \norm{\Cxyapp}^2 \norm{\Cyy\Delta_{\Cyy\inv}}\delta_{\Cyy\inv} + 2 \norm{\Cxyapp} \delta_{\Cxy} \delta_{\Cyy\inv}.
\end{align*}
\end{lemma}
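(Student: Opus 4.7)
The plan is to split $\Lv^\lambda_\xv - \Lvapp^\lambda_\xv$ into two single-perturbation pieces, expand the weighted trace into three summands, and bound each separately using the identity $\Tr(M\tp M\Cyy) = \|M\Cyy^{1/2}\|_F^2$ together with the elementary fact that $\|N\|_F^2 \leq m\|N\|^2$ for any matrix with at most $m$ columns, which is precisely what produces the $1/m$ normalisation on the left-hand side.

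The starting decomposition comes from adding and subtracting $\Cxyapp(\Cyy+\lambda\Id)\inv$:
\begin{equation*}
\Lv^\lambda_\xv - \Lvapp^\lambda_\xv
\;=\;
(\Cxy-\Cxyapp)(\Cyy+\lambda\Id)\inv + \Cxyapp\,\Delta_{\Cyy\inv}
\;=:\;A+B.
\end{equation*}
Expanding yields $\Tr((A+B)\tp(A+B)\Cyy)=\Tr(A\tp A\Cyy)+2\Tr(A\tp B\Cyy)+\Tr(B\tp B\Cyy)$, and I will estimate these three terms separately, choosing in each case whether to split $\Cyy = \Cyy^{1/2}\Cyy^{1/2}$ or to keep $\Cyy$ intact, so that the claimed constants come out directly.

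For $\Tr(A\tp A\Cyy)=\|A\Cyy^{1/2}\|_F^2$, I factor $\|A\Cyy^{1/2}\|^2 \leq \delta_{\Cxy}^2\,\|(\Cyy+\lambda\Id)\inv\Cyy^{1/2}\|^2$ and use the spectral bound $\lambda_i/(\lambda_i+\lambda)^2 \leq 1/(\lambda_i+\lambda)$ to get $\|(\Cyy+\lambda\Id)\inv\Cyy^{1/2}\|^2 \leq \|(\Cyy+\lambda\Id)\inv\|$. For $\Tr(B\tp B\Cyy)=\|B\Cyy^{1/2}\|_F^2$, I first extract $\|\Cxyapp\|^2$ and then rewrite $\|\Delta_{\Cyy\inv}\Cyy^{1/2}\|^2 = \|\Delta_{\Cyy\inv}\Cyy\Delta_{\Cyy\inv}\|$ (valid because $\Delta_{\Cyy\inv}$ is symmetric since both $\Cyy$ and $\Cyyapp$ are), which I submultiplicatively bound by $\|\Delta_{\Cyy\inv}\|\,\|\Cyy\Delta_{\Cyy\inv}\| = \delta_{\Cyy\inv}\|\Cyy\Delta_{\Cyy\inv}\|$. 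For the cross term I keep $\Cyy$ on the $A$-side and apply Cauchy--Schwarz for the Hilbert--Schmidt inner product: $|\Tr(A\tp B\Cyy)|=|\langle A\Cyy,B\rangle_F|\leq \|A\Cyy\|_F\|B\|_F \leq m\|A\Cyy\|\|B\|$, and then $\|A\Cyy\|\leq\delta_{\Cxy}$ (since $\lambda_i/(\lambda_i+\lambda)\leq 1$) together with $\|B\|\leq\|\Cxyapp\|\delta_{\Cyy\inv}$ finishes the bound.

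The main subtlety is purely bookkeeping: deciding, term by term, how to distribute $\Cyy$ between the two factors so that the three claimed norms appear exactly, in particular so that the second term retains the sharper mixed norm $\|\Cyy\Delta_{\Cyy\inv}\|$ rather than the coarser $\|\Cyy\|\delta_{\Cyy\inv}$, and so that the cross term uses the contraction $\|(\Cyy+\lambda\Id)\inv\Cyy\|\leq 1$ in place of $\|(\Cyy+\lambda\Id)\inv\|$. Once these splittings are fixed, summing the three bounds and dividing by $m$ recovers exactly the inequality stated in the lemma.
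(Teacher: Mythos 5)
Your proposal is correct and follows essentially the same route as the paper: the same decomposition $\Lv^\lambda_\xv - \Lvapp^\lambda_\xv = (\Cxy-\Cxyapp)(\Cyy+\lambda\Id)\inv + \Cxyapp\Delta_{\Cyy\inv}$, the same three-term expansion with the same distribution of $\Cyy$ across factors (keeping the mixed norm $\norm{\Cyy\Delta_{\Cyy\inv}}$ in the squared $B$-term and using $(\Cyy+\lambda\Id)\inv\Cyy\preceq\Id$ in the cross term). The only cosmetic difference is that you extract the factor $m$ via $\norm{\cdot}_F^2\le m\norm{\cdot}^2$ and Frobenius Cauchy--Schwarz, whereas the paper invokes the Von Neumann trace inequality together with $\varsigma_i(\Av\Bv)\le\norm{\Av}\norm{\Bv}$; these are equivalent estimates.
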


\begin{proof}
We start by observing that
\begin{align*}
\Expect{\yv}{\norm{\Lv^\lambda_\xv\yv - \Lvapp^\lambda_\xv\yv}^2} = \Expect{\yv}{\yv\tp(\Lv^\lambda_\xv - \Lvapp^\lambda_\xv)\tp(\Lv^\lambda_\xv - \Lvapp^\lambda_\xv)\yv}.
\end{align*}
From there we use the trace trick and the linearity of expectation to get
\begin{align*}
\Expect{\yv}{\yv\tp(\Lv^\lambda_\xv - \Lvapp^\lambda_\xv)\tp(\Lv^\lambda_\xv - \Lvapp^\lambda_\xv)\yv} &= \Expect{\yv}{\Tr\pa{\yv\tp(\Lv^\lambda_\xv - \Lvapp^\lambda_\xv)\tp(\Lv^\lambda_\xv - \Lvapp^\lambda_\xv)\yv}}\\
&= \Expect{\yv}{\Tr\pa{(\Lv^\lambda_\xv - \Lvapp^\lambda_\xv)\tp(\Lv^\lambda_\xv - \Lvapp^\lambda_\xv)\yv\yv\tp}}\\
&= \Tr\pa{(\Lv^\lambda_\xv - \Lvapp^\lambda_\xv)\tp(\Lv^\lambda_\xv - \Lvapp^\lambda_\xv)\Cyy}.
\end{align*}
By using a mixed-product in $\pa{\Lv^\lambda_\xv - \Lvapp^\lambda_\xv}$ we can see that:
\begin{align*}
\begin{split}
\Tr\pa{(\Lv^\lambda_\xv - \Lvapp^\lambda_\xv)\tp(\Lv^\lambda_\xv - \Lvapp^\lambda_\xv)\Cyy} &= \Tr\Big(\pa{\pa{\Cyy+\lambda\Id}\inv\Delta_{\Cxy} + \Delta_{\Cyy\inv}\Cxyapp\tp} \\& \qquad \qquad \qquad \times\pa{\Delta_{\Cxy} \pa{\Cyy+\lambda\Id}\inv + \Cxyapp\Delta_{\Cyy\inv}}\Cyy\Big)
\end{split}\\
\begin{split}
&= \Tr\Big(\pa{\Cyy+\lambda\Id}\inv\Delta_{\Cxy}\tp\Delta_{\Cxy}\pa{\Cyy+\lambda\Id}\inv\Cyy + \Delta_{\Cyy\inv} \Cxyapp\tp\Cxyapp\Delta_{\Cyy\inv}\Cyy  \\ &  \qquad  +\Delta_{\Cyy\inv} \Cxyapp\tp\Delta_{\Cxy}\pa{\Cyy+\lambda\Id}\inv\Cyy + \pa{\Cyy+\lambda\Id}\inv \Delta_{\Cxy}\tp \Cxyapp \Delta_{\Cyy\inv}\Cyy\Big)
\end{split}
\end{align*}
We now use the linearity of the trace and its invariance to circular shift to obtain
\begin{align}\label{eq:Trace_sum_ineq_LMMSE}
\begin{split}
\Tr\pa{(\Lv^\lambda_\xv - \Lvapp^\lambda_\xv)\tp(\Lv^\lambda_\xv - \Lvapp^\lambda_\xv)\Cyy} &= \Tr\pa{\Cxyapp\tp\Cxyapp\Delta_{\Cyy\inv}\Cyy\Delta_{\Cyy\inv}} + 2\Tr\pa{\Delta_{\Cyy\inv} \Cxyapp\Delta_{\Cxy}\pa{\Cyy+\lambda\Id}\inv\Cyy} \\&+ \Tr\pa{\pa{\Cyy+\lambda\Id}\inv\Delta_{\Cxy}\tp\Delta_{\Cxy}\pa{\Cyy+\lambda\Id}\inv\Cyy}.
\end{split}
\end{align}
We can use the fact that $\Cyy$ is symmetric to observe that $\pa{\Cyy+\lambda\Id}\inv\Cyy \preceq \Id$. 

Knowing this, let us apply to~\eqref{eq:Trace_sum_ineq_LMMSE} the Von Neumann trace inequality alongside the fact that $\varsigma_i\pa{\Av\Bv}\leq \norm{\Av}\norm{\Bv}$ to get
\begin{align*}
\frac{1}{m}\Tr\pa{(\Lv^\lambda_\xv - \Lvapp^\lambda_\xv)\tp(\Lv^\lambda_\xv - \Lvapp^\lambda_\xv)\Cyy} &\leq \norm{\pa{\Cyy+\lambda}^{-1}} \delta_{\Cxy}^2 + \norm{\Cxyapp}^2 \norm{\Cyy\Delta_{\Cyy\inv}}\delta_{\Cyy\inv} + 2 \norm{\Cxyapp} \delta_{\Cxy} \delta_{\Cyy\inv}
\end{align*}
which complete the proof.
\end{proof}

We now state two classical lemmas to bound the errors that one can make when estimating empirical (cross)-covariance matrices. As these are existing results, we provide the proofs in the appendix.

\begin{lemma}[Sampling bound of empirical covariance]\label{lemma:error_emp_covariance}
Consider a finite sequence $\{\yv_k\}_{k=1}^{N}, N\in\mathbb{N}$ of independent random vectors with $\yv_k \in \R^m $ $ \forall k$. Assume that $\norm{\yv}^2 \leq \rho_\yv$ almost surely. Then, with probability at least $1-d$, we have for $\xi>0$ that
\begin{align*}
\norm{\frac{1}{N}\sum_{k=0}^N \pa{\yv_k - \theta_\yv}\pa{\yv_k - \theta_\yv} \tp - \Cyy} \leq \norm{\Cyy}\xi,
\end{align*}
when $N \geq \log\pa{\frac{2m}{d}}\frac{\rho_y\pa{6+4\xi}}{3\norm{\Cyy}\xi^2}$.
\end{lemma}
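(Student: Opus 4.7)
\textbf{Proof plan for Lemma~\ref{lemma:error_emp_covariance}.}

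The plan is to recognize this as a standard matrix concentration statement and apply the matrix Bernstein inequality to the sum of the independent, centered, self-adjoint random matrices
\[
X_k \;=\; \tfrac{1}{N}\bigl((\yv_k - \theta_\yv)(\yv_k - \theta_\yv)^\top - \Cyy\bigr),
\qquad k=1,\dots,N,
\]
whose sum is exactly the empirical deviation $\widehat{\Cv}_{\yv\yv}-\Cyy$ that we need to control in spectral norm. First I would verify the two hypotheses required by matrix Bernstein: a uniform norm bound and a bound on the matrix variance.

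For the norm bound, using that the centered samples satisfy $\|\yv_k-\theta_\yv\|^2 \le \rho_\yv$ a.s., a triangle inequality gives $\|X_k\| \le (\|\yv_k-\theta_\yv\|^2 + \|\Cyy\|)/N \le 2\rho_\yv/N$, where I also use $\|\Cyy\| \le \E\|\yv-\theta_\yv\|^2 \le \rho_\yv$. For the variance statistic, I would expand
\[
\E[X_k^2] \;=\; \tfrac{1}{N^2}\bigl(\E[\|\yv-\theta_\yv\|^2\,(\yv-\theta_\yv)(\yv-\theta_\yv)^\top] - \Cyy^2\bigr)
\;\preceq\; \tfrac{\rho_\yv}{N^2}\,\Cyy,
\]
using the a.s.\ bound on $\|\yv-\theta_\yv\|^2$ and positive semidefiniteness. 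Summing yields $\bigl\|\sum_{k=1}^N \E[X_k^2]\bigr\| \le \rho_\yv\|\Cyy\|/N$.

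Next I would invoke matrix Bernstein in the form
\[
\Pr\!\Bigl(\bigl\|\textstyle\sum_k X_k\bigr\| \ge t\Bigr) \;\le\; 2m \exp\!\Bigl(\tfrac{-t^2/2}{v + Lt/3}\Bigr),
\]
with $v = \rho_\yv\|\Cyy\|/N$ and $L = 2\rho_\yv/N$, and substitute the target deviation $t = \|\Cyy\|\,\xi$. After simplification the exponent becomes $-3N\|\Cyy\|\xi^2/\bigl(\rho_\yv(6+4\xi)\bigr)$, so requiring the right-hand side to be at most $d$ and solving for $N$ gives exactly the stated sample size requirement
\[
N \;\ge\; \log\!\Bigl(\tfrac{2m}{d}\Bigr)\,\tfrac{\rho_\yv(6+4\xi)}{3\|\Cyy\|\xi^2}.
\]

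I do not expect a genuine obstacle: everything reduces to a careful bookkeeping of the Bernstein constants. The only subtlety is ensuring the correct factor in the uniform bound on $\|X_k\|$ (which is what produces the $6+4\xi$ rather than $6+2\xi$ in the numerator), and remembering that $\|\Cyy\|$ is itself controlled by $\rho_\yv$, which in turn removes the need for any extra assumption beyond the almost-sure bound on the centered samples.
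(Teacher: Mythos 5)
Your proposal is correct and follows exactly the route the paper intends: the paper's own "proof" merely cites the matrix Bernstein inequality of Tropp and the standard argument, and your calculation of the summand norm bound $2\rho_\yv/N$, the variance statistic $\rho_\yv\|\Cyy\|/N$, and the resulting exponent reproduces the stated sample-size condition precisely (it also mirrors the paper's fully written-out proof of the analogous cross-covariance lemma). The only caveat is cosmetic: the lemma as stated assumes $\|\yv\|^2\le\rho_\yv$ while you (like the paper elsewhere) apply the bound to the centered vectors $\|\yv-\theta_\yv\|^2\le\rho_\yv$, which is the reading consistent with how the lemma is used in Theorem~\ref{th:main_sampling_bound}.
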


\begin{proof}
See Section~\ref{proof:error_emp_covariance}
\end{proof}

\begin{lemma}[Sampling bound of empirical cross-covariance]\label{lemma:error_emp_cross-covariance}
Consider two finite sequences $\{\xv_k\}_{k=1}^{N}, N\in\mathbb{N}$ and $\{\yv_k\}_{k=1}^{N}, N\in\mathbb{N}$ of independent random vectors with $\yv_k \in \R^m, \xv_k \in \R^n $  $ \forall k$. Assume that $\norm{\xv}^2 \leq \rho_x$ and $\norm{\yv}^2 \leq \rho_y$ almost surely. Then, with probability at least $1-d$, we have for $\xi>0$ that
\begin{align*}
\norm{\frac{1}{N}\sum_{k=0}^N\pa{\xv_k - \theta_\xv}\pa{\yv_k - \theta_\yv} \tp - \Cxy} \leq \norm{\Cxy}\xi,
\end{align*}
when $N \geq \log\pa{\frac{n+m}{d}}\frac{2\max\pa{\rho_\xv,\rho_\yv}\max\pa{\norm{\Cxx},\norm{\Cyy}}\pa{3 + 2\xi}}{3\min\pa{\norm{\Cxx},\norm{\Cyy}}^2\xi^2}$.
\end{lemma}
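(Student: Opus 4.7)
My plan is to apply a (rectangular) matrix Bernstein inequality to the sum of i.i.d., centred rank-one matrices
\[
\Mv_k := \pa{\xv_k-\theta_\xv}\pa{\yv_k-\theta_\yv}\tp - \Cxy \in \R^{n\times m}, \qquad k=1,\dots,N,
\]
since the quantity to control is exactly $\tfrac{1}{N}\sum_{k=1}^N \Mv_k$ and $\Expect{}{\Mv_k} = 0$. Bernstein requires two ingredients: an almost-sure uniform bound $\|\Mv_k\| \leq L$, and a variance proxy $\sigma^2 \geq \max\bigl(\|\sum_k \Expect{}{\Mv_k \Mv_k\tp}\|,\, \|\sum_k \Expect{}{\Mv_k\tp \Mv_k}\|\bigr)$. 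This follows the same blueprint as Lemma~\ref{lemma:error_emp_covariance}, but applied to a rectangular, non-self-adjoint sum; one can equivalently apply symmetric-matrix Bernstein to the self-adjoint dilation of $\Mv_k$.

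For the uniform bound, I would use the almost-sure bound $\|\xv_k\|^2 \leq \rho_\xv$ together with $\|\theta_\xv\| \leq \Expect{}{\|\xv_k\|} \leq \sqrt{\rho_\xv}$, giving $\|\xv_k - \theta_\xv\| \leq 2\sqrt{\rho_\xv}$ and similarly for $\yv_k$. Since $\Mv_k$ has rank at most two, $\|\Mv_k\| \leq 4\sqrt{\rho_\xv\rho_\yv} + \|\Cxy\|$, and AM--GM together with the standard bound $\|\Cxy\| \leq \sqrt{\|\Cxx\|\|\Cyy\|} \leq \max(\|\Cxx\|,\|\Cyy\|)$ turn this into an $L$ expressed in terms of $\max(\rho_\xv,\rho_\yv)$ and $\max(\|\Cxx\|,\|\Cyy\|)$. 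For the variance proxy, I would expand $\Mv_k \Mv_k\tp$, cancel the cross terms by independence, and use $\|\yv_k - \theta_\yv\|^2 \leq 4\rho_\yv$ a.s. to obtain $\Expect{}{\Mv_k \Mv_k\tp} \preceq 4\rho_\yv \Cxx$, and symmetrically $\Expect{}{\Mv_k\tp \Mv_k} \preceq 4\rho_\xv \Cyy$, so that $\sigma^2 \leq 4N\max(\rho_\xv,\rho_\yv)\max(\|\Cxx\|,\|\Cyy\|)$.

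Feeding these into the Bernstein tail
\[
\mathbb{P}\!\left[\Bigl\|\sum_{k=1}^N \Mv_k\Bigr\|\geq t\right] \leq (n+m)\exp\!\left(\frac{-t^2/2}{\sigma^2 + Lt/3}\right),
\]
setting $t = N\xi\|\Cxy\|$, bounding the right-hand side by $d$, taking logarithms, and solving for $N$ yields a sample-complexity condition of the shape $N \geq \log\tfrac{n+m}{d}\cdot 2\bigl(3\sigma^2/N + \xi\|\Cxy\|L\bigr)/(3\xi^2\|\Cxy\|^2)$. The factor $(3 + 2\xi)$ in the statement appears when one collects the variance and Bernstein-correction terms over a common denominator, where the $2\xi$ piece originates in the uniform bound after reusing $\|\Cxy\|\leq \max(\|\Cxx\|,\|\Cyy\|)$ inside $L$; the denominator $\min(\|\Cxx\|,\|\Cyy\|)^2$ arises from lower bounding $\|\Cxy\|^2$ so as to express the condition purely in terms of the marginal covariance norms.

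The main obstacle is tracking the constants cleanly: Bernstein treats the variance (quadratic in $t$) and the uniform bound (linear in $t$) asymmetrically, so matching the stated coefficient $2\max(\rho_\xv,\rho_\yv)\max(\|\Cxx\|,\|\Cyy\|)(3+2\xi)/3$ requires consistent use of the coarse surrogates $\sqrt{\rho_\xv\rho_\yv}\leq \max(\rho_\xv,\rho_\yv)$ and $\|\Cxy\|\leq \sqrt{\|\Cxx\|\|\Cyy\|}$ throughout, and careful handling of the fact that $\Mv_k$ is rectangular (so both $\Expect{}{\Mv_k\Mv_k\tp}$ and $\Expect{}{\Mv_k\tp\Mv_k}$ must be estimated, and the worst of the two taken). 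The probabilistic core—Bernstein's inequality—is standard; the technical work is purely in these bookkeeping bounds.
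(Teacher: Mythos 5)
Your proposal is correct and follows essentially the same route as the paper: matrix Bernstein applied to the centred rank-one summands, with the uniform bound and the two variance proxies $\Expect{}{\Mv_k\Mv_k\tp}\preceq \rho_\yv\Cxx$ and $\Expect{}{\Mv_k\tp\Mv_k}\preceq\rho_\xv\Cyy$, followed by the surrogate $\min(\norm{\Cxx},\norm{\Cyy})\leq\norm{\Cxy}\leq\max(\norm{\Cxx},\norm{\Cyy})$ to produce the stated sample-size condition. The only discrepancy is in constants: the paper's proof bounds $\norm{\xv_k-\theta_\xv}\leq\sqrt{\rho_\xv}$ directly (i.e.\ it reads $\rho_\xv$ as a bound on the \emph{centred} norm, as in Theorem~\ref{th:main_sampling_bound}), whereas you apply the triangle inequality to the uncentred bound and pick up an extra factor of $2$ per vector, so your version would not reproduce the coefficient $(3+2\xi)$ exactly as stated.
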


\begin{proof}
See Section~\ref{proof:error_emp_cross-covariance}
\end{proof}

Finally, we provide a lemma to bound the norm of the difference of two inverses of regularized operators.
\begin{lemma}[Difference of regularized inverses] \label{lemma:dif_regu_inverse_matrices}
 Let $\Mv \in \R^{m\times m}$ and $\widehat{\Mv} \in  \R^{m\times m}$ be symmetric matrices such that $\norm{\widehat{\Mv} - \Mv}\leq \xi$ with $\xi>0$. Let us also define $\gamma = \sigmin(\Mv)$ and $\lambda \in \mathbb{\R_+^*}$. If $\xi < \gamma + \lambda$, then:
\begin{align*}
\norm{\pa{\widehat{\Mv} + \lambda \Id}\inv - \pa{\Mv+\lambda\Id}\inv} \leq \frac{\xi}{\pa{\gamma + \lambda} - \xi}   \norm{\pa{\Mv + \lambda \Id}\inv} 
\end{align*}
\end{lemma}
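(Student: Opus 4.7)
The plan is to use the standard resolvent-type identity together with a Neumann-series bound. Write $A = \widehat{\Mv} + \lambda \Id$ and $B = \Mv + \lambda \Id$. Since both matrices are invertible under our assumption (this needs to be checked first, see below), the basic algebraic identity
\[
A\inv - B\inv = A\inv (B - A) B\inv = A\inv (\Mv - \widehat{\Mv}) B\inv
\]
holds. Taking operator norms and using submultiplicativity together with the hypothesis $\|\widehat{\Mv} - \Mv\| \leq \xi$ immediately yields
\[
\|A\inv - B\inv\| \;\leq\; \xi \, \|A\inv\| \, \|B\inv\|.
\]

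The remaining task is to control $\|A\inv\|$ in terms of quantities depending only on $\Mv$. Here I would use the classical perturbation bound that follows from a Neumann series expansion: writing $A = B + (\widehat{\Mv} - \Mv)$ and factoring out $B$ on the left gives $A = B(\Id + B\inv(\widehat{\Mv} - \Mv))$. Since $\|B\inv(\widehat{\Mv} - \Mv)\| \leq \xi \|B\inv\|$, the assumption $\xi < \gamma + \lambda$ combined with $\|B\inv\| = 1/(\gamma+\lambda)$ ensures $\|B\inv(\widehat{\Mv}-\Mv)\| < 1$, so the Neumann series converges and
\[
\|A\inv\| \;\leq\; \frac{\|B\inv\|}{1 - \xi\|B\inv\|} \;=\; \frac{1}{(\gamma+\lambda) - \xi}.
\]
Substituting this back and using $\|B\inv\| = 1/(\gamma+\lambda) = \|(\Mv+\lambda\Id)\inv\|$ gives exactly the claimed bound
\[
\|A\inv - B\inv\| \;\leq\; \|(\Mv+\lambda\Id)\inv\| \cdot \frac{\xi}{(\gamma+\lambda) - \xi}.
\]

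The only subtlety, and the step I would double check, is the identification $\|(\Mv+\lambda\Id)\inv\| = 1/(\gamma+\lambda)$: this uses that $\Mv$ is symmetric and (implicitly in the intended application to covariance matrices) positive semidefinite, so that the smallest singular value of $\Mv+\lambda\Id$ equals $\gamma + \lambda$. Since the lemma is applied to matrices of the form $\Cyy$ and $\widehat{\Cyy}$, which are positive semidefinite by construction, this is harmless; still, I would state this clearly at the start of the proof to avoid ambiguity with the symmetric but possibly indefinite case. Once this is settled, the entire argument fits comfortably in a few lines with no further obstacle.
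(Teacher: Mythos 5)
Your proof is correct and follows essentially the same route as the paper's: the resolvent identity $A^{-1}-B^{-1}=A^{-1}(B-A)B^{-1}$ followed by a Neumann-series bound on $\|(\widehat{\Mv}+\lambda\Id)^{-1}\|$ in terms of $\|(\Mv+\lambda\Id)^{-1}\|$. Your closing remark about needing $\Mv\succeq 0$ for the identification $\|(\Mv+\lambda\Id)^{-1}\|=1/(\gamma+\lambda)$ is a legitimate point that the paper's proof also relies on implicitly, and is harmless in the intended application to covariance matrices.
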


\begin{proof}

 Using the fact that for invertible matrices $\Av\inv - \Bv\inv = \Av\inv\pa{\Bv - \Av}\Bv\inv$, we have that
\begin{align}\label{eq:D1_bound_regu_inverse}
\norm{\pa{\widehat{\Mv} + \lambda \Id}\inv - \pa{\Mv + \lambda \Id}\inv} &= \norm{\pa{\widehat{\Mv} + \lambda \Id}\inv\pa{\Mv - \widehat{\Mv}}\pa{\Mv + \lambda \Id}\inv}\nonumber\\
&\leq \norm{\pa{\widehat{\Mv} + \lambda \Id}\inv}\norm{\pa{\Mv + \lambda \Id}\inv}\xi.
\end{align}
The next step is to bound $\norm{\pa{\widehat{\Mv} + \lambda \Id}\inv}$ which we first reformulate as
\begin{align*}
\norm{\pa{\widehat{\Mv} + \lambda \Id}\inv} &= \norm{\pa{\Id - \pa{\Mv + \lambda \Id}\inv\pa{\Mv - \widehat{\Mv}}}\inv \pa{\Mv + \lambda \Id}\inv}.
\end{align*}
Now we can use the fact that $\xi \leq \gamma + \lambda$ which means that $\norm{\pa{\Mv + \lambda \Id}\inv\pa{\Mv - \widehat{\Mv}}} < 1$ and thus by Neumann series expansion, we obtain that
\begin{align*}
\norm{\pa{\Id - \pa{\Mv + \lambda \Id}\inv\pa{\Mv - \widehat{\Mv}}}\inv} = \frac{1}{1 - \frac{\xi}{\gamma + \lambda}}.
\end{align*}
This means that 
\begin{align}\label{eq:von_neumann_trick_inverse}
\norm{\pa{\widehat{\Mv} + \lambda \Id}\inv} \leq \norm{\pa{\Mv + \lambda \Id}\inv}\frac{1}{1 - \frac{\xi}{\gamma + \lambda}} = \frac{1}{\pa{\gamma + \lambda} - \xi}.
\end{align}
We can now combine this result with~\eqref{eq:D1_bound_regu_inverse} and get:
\begin{align}\label{eq:bound_final_D1_regu_inverse}
\norm{\pa{\widehat{\Mv} + \lambda \Id}\inv - \pa{\Mv + \lambda \Id}\inv} \leq \frac{\xi}{\pa{\gamma + \lambda} - \xi} \norm{\pa{\Mv + \lambda \Id}\inv}.
\end{align}

which concludes the proof.
\end{proof}

We can now state the proof of Theorem~\ref{th:main_sampling_bound}:
\subsubsection{Proof of Theorem~\ref{th:main_sampling_bound}}\label{proof:sampling_bound}
The starting point of the proof is Lemma~\ref{lemma:Lipschitz-LMMSE} that tells us that for a given linear operator of the form of $\Lvapp^\lambda_\xv$,  if we can bound $\norm{\Cyy\Delta_{\Cyy\inv}}$, $\norm{\Delta_{\Cyy\inv}}$ and $\norm{\Cxy - \Cxyapp}$ where $\Delta_{\Cyy\inv} = \pa{\Cyy+\lambda\Id}\inv - \pa{\Cyyapp+\Id}\inv$, then we can bound the error of such operator with respect to the LMMSE.

In our case, we consider as  operator the ``empirical'' LMMSE, that uses (paired) supervised data to estimate $\Cxy$ and $\Cyy$. More precisely, we will use $\Cxyapp = \frac{1}{N}\sum_{k=1}^N \xv_k\yv_k\tp$ and $\Cyyapp=\pa{\frac{1}{N}\sum_{k=1}^N \yv_k\yv_k\tp + \lambda\Id}$ as sample versions of $\Cxx$ and $\Cyy$ where we added the regularization $\lambda \Id$ to ensure invertibility. In order to obtain our bounds, we will rely on Lemma~\ref{lemma:error_emp_cross-covariance} and Lemma~\ref{lemma:error_emp_covariance} which provide sampling bounds for such empirical (cross-)covariance matrices. 

If we consider a number of samples $N$ such that $N > \log\pa{\frac{n+m}{d}}\frac{2\max\pa{\rho_\xv,\rho_\yv}\norm{\Cyy}\pa{3+2\xi}}{3\xi^2\norm{\Cxx}^2}$, we have by Lemma~\ref{lemma:error_emp_cross-covariance} and by Lemma~\ref{lemma:error_emp_covariance} that
\begin{align*}
\norm{\Cxy - \Cxyapp} \leq \norm{\Cxy}\xi \quad\quad \text{and} \quad\quad \norm{\frac{1}{N}\sum_{k=1}^N \yv_k\yv_k\tp - \Cyy} \leq \norm{\Cyy}\xi
\end{align*}
hold with probability $1-2d$ by using a union bound. We thus have our first intermediate bound.

In order to have a bound over $\norm{\Delta_{\Cyy\inv}}$, we use Lemma~\ref{lemma:dif_regu_inverse_matrices} with our previous result which gives us that
\begin{align*}
\norm{\pa{\Cyyapp + \lambda \Id}\inv - \pa{\Cyy+\lambda\Id}\inv} \leq  \norm{\pa{\Cyy + \lambda \Id}\inv} \frac{\xi}{\pa{\gamma + \lambda} - \xi}.
\end{align*}

For our last intermediate bound, by combining $\norm{\Cyy\pa{\Cyy + \lambda\Id}\inv}\prec \Id$ with the properties of two invertibles matrices $\Av\inv - \Bv\inv = \Av\inv\pa{\Bv - \Av}\Bv\inv$, we obtain that:
\begin{align*}
\norm{\Cyy\Delta_{\Cyy\inv}} &= \norm{\Cyy\pa{\Cyy + \lambda\Id}\inv\pa{\Cyyapp-\Cyy}\pa{\Cyyapp + \lambda\Id}\inv}\\
&\leq \xi \norm{\pa{\Cyyapp + \lambda\Id}\inv}.
\end{align*}
Now we can use the result of~\eqref{eq:von_neumann_trick_inverse} to obtain our final intermediate bound:
\begin{align*}
\norm{\Cyy\Delta_{\Cyy\inv}} \leq \frac{\xi}{\gamma + \lambda -\xi}
\end{align*}

Equipped with such bounds we can go back to Lemma~\ref{lemma:Lipschitz-LMMSE}  and use that $\norm{\Cxyapp} \leq \norm{\Cxy}(1+\xi)$ to obtain 
\begin{align*}
\Expect{\yv}{\frac{1}{m}\norm{\Lv^\lambda_\xv\yv - \Lvapp^\lambda_\xv\yv}^2} \leq \norm{\Cxy}^2\frac{\xi^2}{\gamma+\lambda}\pa{1 + \frac{\pa{1+\xi}^2}{\pa{\gamma+\lambda-\xi}^2} + \frac{2\pa{1+\xi}}{\pa{\gamma+\lambda-\xi}}}
\end{align*}
which is the claim of the theorem.

\subsubsection{Proof of Corollary~\ref{corollary:sampling_bound}}\label{proof:corollary_sampling}
Let us start from the results of Theorem~\ref{th:main_sampling_bound}. There the goal is to choose a $\xi$ that entails the desired rate. We will choose in our case $\xi=2\sqrt{\frac{K}{N}}$ which means that by our assumption on $\lambda$ our original bound $\gamma + \lambda >  \xi$ trivially holds. The next question is when does~\eqref{eq:bound_N_to_alpha} hold. For such $\xi$, plugging into~\eqref{eq:bound_N_to_alpha}:
\begin{align*}
N &> K\frac{3+2\xi}{\xi^2}\\
N &> \frac{N}{4}\pa{3 + \frac{\sqrt{K}}{\sqrt{N}}}\\
N &> N\pa{3/4 + \frac{\sqrt{K}}{4\sqrt{N}}}
\end{align*}
which holds when $N>K$.

We can now input this choice of $\xi$ in~\eqref{eq:sampling_bound_main}. We first observe that under our choice of $\lambda$ and $\xi$, we have that $\frac{\xi}{\gamma + \lambda} \leq 1/2$, which allows us to have the following chain of inequalities:
\begin{align*}
\frac{\xi+1}{\gamma+\lambda - \xi} &= \frac{1}{\gamma+\lambda}\frac{1+\xi}{1 - \frac{\xi}{\gamma+\lambda}}\\
&\leq \frac{1}{\gamma+\lambda}\pa{2+2\xi} \leq \frac{4}{\gamma+\lambda}
\end{align*}
where we used that when $N>K$ then $\xi\leq 1$. With this result in mind, we can replace directly  $\xi$ in~\eqref{eq:sampling_bound_main} to obtain
\begin{align*}
\Expect{\yv}{\norm{\Lv^\lambda_\xv\yv - \Lvapp^\lambda_\xv\yv}^2} \leq m\frac{9\norm{\Cxy}^2K}{\pa{\gamma+\lambda}N}\pa{1 + \frac{16}{\pa{\gamma+\lambda}^2} + \frac{8}{\gamma+\lambda}}
\end{align*}
which proves the claim.



\section{Conclusions and Outlook}


In this work, we provided the first theoretical characterization of Linear Minimum Mean Square Estimators (LMMSEs) for blind inverse problems and showed how uncertainty in the forward operator modifies the structure of the covariance matrix of the measurements compared to the non-blind setting. We derived finite-sample complexity bounds under an appropriate source condition by decomposing the expected error of the empirical LMMSE into the sum of an approximation error and a sampling error. Notably, the approximation error generalizes classical (i.e., non-blind) results by explicitly separating the contributions of measurement noise and operator uncertainty. In contrast, the sampling error bound does not show dependence  on the source condition and achieves the same 
$O(1/N)$ convergence rate as in the non-blind case, with $N$ denoting the sample size. All theoretical results are validated through illustrative numerical experiments.

Future work will focus on extending the presented theoretical analysis to nonlinear estimators tailored to the blind setting, including sparsity-promoting regularization schemes, in the spirit of recent work on learning optimal synthesis operators \cite{alberti2024learningsparsity}. Beyond the linear regime, we aim to further explore the role of (L)MMSE-type estimators in applied inverse problems, where they may provide a principled and computationally efficient alternative to commonly used MAP estimators, which are often suboptimal in practice \cite{nguyen2025diffusion, buskulic2026comparative}. To this end, we plan to investigate learning-based approaches—particularly neural network–based methods—to estimate the components of the LMMSE more efficiently. Building on recent theoretical advances in the non-blind setting, such approaches may indeed improve sample efficiency in high-dimensional regimes while preserving the approximation properties of LMMSE estimators and enabling tighter finite-sample guarantees.


\section*{Acknowledgments}

The work of N. Buskulic, L. Calatroni and S. Villa was supported by the funding received from the European Research Council (ERC) Starting project MALIN under the European Union’s Horizon Europe programme (grant agreement No. 101117133).
S. Villa acknowledges the support of the European Commission (grant TraDE-OPT 861137), of the European Research Council (grant SLING 819789), the US Air Force Office of Scientific Research (FA8655-22-1-7034), the Ministry of Education, University and Research (PRIN 202244A7YL project ``Gradient Flows and Non-Smooth Geometric Structures with Applications to Optimization and Machine Learning’’), and the project PNRR FAIR PE0000013 - SPOKE 10. The research by S. Villa. has been supported by the MIUR Excellence Department Project awarded to Dipartimento di Matematica, Università di Genova, CUP D33C23001110001. S. Villa is a member of the Gruppo Nazionale per l’Analisi Matematica, la Probabilità e le loro Applicazioni (GNAMPA) of the Istituto Nazionale di Alta Matematica (INdAM). 
This work represents only the view of the authors.
L. R. acknowledges the financial support of: the European Commission
(Horizon Europe grant ELIAS 101120237), the Ministry of Education, University and
Research (FARE grant ML4IP R205T7J2KP), the European Research Council (grant
SLING 819789), the US Air Force Office of Scientific Research (FA8655-22-1-7034), the
Ministry of Education, the grant BAC FAIR PE00000013 funded by the EU - NGEU
and the MIUR grant (PRIN 202244A7YL).
 The European Commission and the other organizations are not responsible for any use that may be made of the information it contains.


\bibliography{references}

\appendix
\section{Useful Lemmas for Theorem~\ref{th:main_sampling_bound}}

We detail here for completeness the proofs of lemmas that are used to prove Theorem~\ref{th:main_sampling_bound} for which we have either used existing results or adapted existing results.

\subsection{Proof of Lemma~\ref{lemma:error_emp_covariance}}\label{proof:error_emp_covariance}

\begin{lemma*}[\ref{lemma:error_emp_covariance} Sampling bound of empirical covariance]
Consider a finite sequence $\{\yv_k\}_{N\in \mathbb{N}}$ of independent random vectors with dimension $m$. Assume that $\norm{\yv}^2 \leq \rho_\yv$ almost surely. Then, with probability at least $1-d$, we have for $\xi>0$ that
\begin{align*}
\norm{\frac{1}{N}\sum_{k=0}^N \yv_k\yv_k \tp - \Cyy} \leq \norm{\Cyy}\xi,
\end{align*}
when $N \geq \log\pa{\frac{2m}{d}}\frac{\rho_y\pa{6+4\xi}}{3\norm{\Cyy}\xi^2}$.
\end{lemma*}

\begin{proof}
This is a very classical proof and builds on the matrix Bernstein inequality~\cite[Theorem 6.1.1]{tropp_introduction_2015}. The proof structure is found in \cite[Section 1.6.3]{tropp_introduction_2015}.
\end{proof}

\subsection{Proof of Lemma~\ref{lemma:error_emp_cross-covariance}}\label{proof:error_emp_cross-covariance}

\begin{lemma*}[\ref{lemma:error_emp_cross-covariance} Sampling bound of empirical cross-covariance]
Consider two finite sequences $\{\xv_k\}_{N\in \mathbb{N}}$ and $\{\yv_k\}_{N\in \mathbb{N}}$ of independent random vectors with dimension $n$ and $m$ respectively. Assume that $\norm{\xv}^2 \leq \rho_x$ and $\norm{\yv}^2 \leq \rho_y$ almost surely. Then, with probability at least $1-d$, we have for $\xi>0$ that
\begin{align*}
\norm{\frac{1}{N}\sum_{k=0}^N \pa{\xv_k - \theta_\xv}\pa{\yv_k - \theta_\yv} \tp - \Cxy} \leq \norm{\Cxy}\xi,
\end{align*}
when $N \geq \log\pa{\frac{n+m}{d}}\frac{2\max\pa{\rho_\xv,\rho_\yv}\max\pa{\norm{\Cxx},\norm{\Cyy}}\pa{3 + 2\xi}}{3\min\pa{\norm{\Cxx},\norm{\Cyy}}^2\xi^2}$.
\end{lemma*}

\begin{proof}
While the proof for covariance matrices is easily found, we add here for completeness the proof for the cross-covariance case, even if it can be found in other places. The proof relies on using a matrix Bernstein inequality~\cite[Theorem 6.1.1]{tropp_introduction_2015} to bound the covariance error.
\paragraph*{Bound the covariance term.} In order to bound our covariance term, we will use the matrix Bernstein inequality~\cite[Theorem 6.1.1]{tropp_introduction_2015} and we will use as summands $\Sv_k= \frac{1}{N} \pa{ \pa{\xv_k - \theta_\xv} \pa{\yv_k - \theta_\yv} \tp - \Cxy}$, as they are centered and independent since we use the true mean ($\theta_\xv$ and $\theta_\yv$) and not the empirical ones. 
 Let us start by bounding the norm of each summand in the following way:
\begin{align*}
\norm{\Sv_k} &= \frac{1}{N}\norm{\pa{\xv_k - \theta_\xv} \pa{\yv_k - \theta_\yv} \tp - \Cxy}\\
&\leq \frac{1}{N}\pa{\norm{\xv_k - \theta_\xv}\norm{\yv_k - \theta_\yv} + \norm{\Cxy}}\\
&\leq \frac{2\sqrt{\rho_\xv\rho_\yv}}{N} \leq \frac{2\max\pa{\rho_\xv,\rho_\yv}}{N}.
\end{align*}

We now turn our attention to the variance statistic of the sum of summands. As the summands are not symmetric we have to verify both $\norm{\sum_{k=1}^N \Expect{}{\Sv_k\Sv_k\tp}}$ and $\norm{\sum_{k=1}^N \Expect{}{\Sv_k\tp\Sv_k}}$. We start with the bound on $\norm{\sum_{k=1}^N \Expect{}{\Sv_k\Sv_k\tp}}$:
\begin{align*}
\norm{\sum_{k=1}^N \Expect{}{\Sv_k\Sv_k\tp}} &\leq \norm{\sum_{k=1}^N \frac{1}{N^2}\Expect{}{\pa{\pa{\xv_k - \theta_\xv} \pa{\yv_k - \theta_\yv} \tp - \Cxy}\pa{\pa{\xv_k - \theta_\xv} \pa{\yv_k - \theta_\yv} \tp - \Cxy} \tp}}\\
&\leq \frac{1}{N}\norm{\Expect{}{\norm{\yv_k - \theta_\yv}^2\pa{\xv_k - \theta_\xv}\pa{\xv_k - \theta_\xv}\tp} - \Cxy\Cxy\tp}\\
&\leq \frac{\rho_\yv}{N}\norm{\Cxx}
\end{align*}
where we used that $-\Cxy\Cxy\tp$ is negative semi-definite by definition and we could thus drop it in the norm. We now go to bounding $\norm{\sum_{k=1}^N \Expect{}{\Sv_k\tp\Sv_k}}$:
\begin{align*}
\norm{\sum_{k=1}^N \Expect{}{\Sv_k\tp\Sv_k}} &\leq \norm{\sum_{k=1}^N \frac{1}{N^2}\Expect{}{\pa{\pa{\xv_k - \theta_\xv} \pa{\yv_k - \theta_\yv} \tp - \Cxy}\tp\pa{\pa{\xv_k - \theta_\xv} \pa{\yv_k - \theta_\yv} \tp - \Cxy} }}\\
&\leq \frac{1}{N}\norm{\Expect{}{\norm{\xv_k - \theta_\xv}^2\pa{\yv_k - \theta_\yv}\pa{\yv_k - \theta_\yv}\tp} - \Cxy\tp\Cxy}\\
&\leq \frac{\rho_\xv}{N}\norm{\Cyy}.
\end{align*}
Where the last inequality hold because $-\Cxy\tp\Cxy$ is negative semi-definite. We can now combine the last two bounds to see that the variance statistic of our sum of summands is given by:
\begin{align*}
\max\pa{\norm{\sum_{k=1}^N \Expect{}{\Sv_k\Sv_k\tp}},\norm{\sum_{k=1}^N \Expect{}{\Sv_k\tp\Sv_k}}} \leq \frac{\max\pa{\rho_\xv,\rho_\yv}\max\pa{\norm{\Cxx},\norm{\Cyy}}}{N}.
\end{align*}

We are now in the position to calculate our matrix Bernstein concentration inequality which reads:
\begin{align*}
\mathbb{P}\pa{\norm{\frac{1}{N}\sum_{k=0}^N \xv_k\yv_k \tp - \Cxy} \geq t} \leq  \pa{n+m}\exp\pa{-\frac{3t^2N}{2\max\pa{\rho_\xv,\rho_\yv}\pa{3\max\pa{\norm{\Cxx},\norm{\Cyy}} + 2t}}}.
\end{align*}

Let us choose $t=\norm{\Cxy}\xi$ for some $\xi>0$. We first use that:
\begin{align*}
\min\pa{\norm{\Cxx},\norm{\Cyy}} \leq \norm{\Cxy} \leq \max\pa{\norm{\Cxx},\norm{\Cyy}}
\end{align*}
to obtain that our inequality is now:
\begin{align*}
\mathbb{P}\pa{\norm{\frac{1}{N}\sum_{k=0}^N \xv_k\yv_k \tp - \Cxy} \geq \norm{\Cxy}\xi} \leq  \pa{m+n}\exp\pa{-\frac{3\min\pa{\norm{\Cxx},\norm{\Cyy}}^2\xi^2N}{2\max\pa{\rho_\xv,\rho_\yv}\max\pa{\norm{\Cxx},\norm{\Cyy}}\pa{3 + 2\xi}}}.
\end{align*} 
The last step is to find some $N$ such that the right-hand side of this equation is equal to $d>0$. We obtain our result by the following chain of inequalities:
\begin{align*}
&\pa{m+n}\exp\pa{-\frac{3\min\pa{\norm{\Cxx},\norm{\Cyy}}^2\xi^2N}{2\max\pa{\rho_\xv,\rho_\yv}\max\pa{\norm{\Cxx},\norm{\Cyy}}\pa{3 + 2\xi}}} \leq d\\
&\frac{3\min\pa{\norm{\Cxx},\norm{\Cyy}}^2\xi^2N}{2\max\pa{\rho_\xv,\rho_\yv}\max\pa{\norm{\Cxx},\norm{\Cyy}}\pa{3 + 2\xi}} \geq \log\pa{\frac{n+m}{d}} \\
&N \geq \log\pa{\frac{n+m}{d}}\frac{2\max\pa{\rho_\xv,\rho_\yv}\max\pa{\norm{\Cxx},\norm{\Cyy}}\pa{3 + 2\xi}}{3\min\pa{\norm{\Cxx},\norm{\Cyy}}^2\xi^2}
\end{align*}
which gives the claim of the lemma.
\end{proof}

\section{Empirical covariance estimation}

We give a lemma that formalizes why using our ``hybrid'' empirical covariance strategy in~\eqref{eq:approx_LMMSE} that uses the true mean and not a sample estimation is acceptable given enough samples.

\begin{lemma}[Error between hybrid and empirical covariances]\label{lemma:covariance_error}
Consider two finite sequences $\{\xv_k\}_{k=1}^N$ and $\{\yv_k\}_{k=1}^N$ of independent random vectors with $\xv_k\in\R^n$ and $\yv_k\in\R^m$. Assume that $\norm{\xv_k - \theta_\xv}^2\leq\rho_\xv$ and  $\norm{\yv_k - \theta_\yv}^2\leq\rho_\yv$ almost surely. Let $\widehat{\Cv}_{\xv\yv} = \frac{1}{N}\sum_{k=1}^N\pa{\xv_k - \theta_\xv}\pa{\yv_k - \theta_\yv}\tp$ denote the empirical cross-covariance using true means and $\widetilde{\Cv}_{\xv\yv} = \frac{1}{N}\sum_{k=1}^N\pa{\xv_k - \widetilde{\xv}}\pa{\yv_k -  \widetilde{\yv}}\tp$ denote the fully empirical cross-covariance using sample means $\widetilde{\xv}$ and $\widetilde{\yv}$. Then with probability at least $1-2d$, we have:
\begin{align*}
\norm{\widehat{\Cv}_{\xv\yv} - \widetilde{\Cv}_{\xv\yv}}\lesssim (Nd)\inv.
\end{align*}
\end{lemma}

\begin{proof}
We first observe by expanding $\widetilde{\Cv}_{\xv\yv}$:
\[
\widetilde{\Cv}_{\xv\yv} = \frac{1}{N}\sum_{k=1}^N\pa{\xv_k - \theta_\xv - (\widetilde{\xv} - \theta_\xv)}\pa{\yv_k - \theta_\yv - (\widetilde{\yv} - \theta_\yv)}\tp.
\]
By distributing the terms and using that $\frac{1}{N}\sum_{k=1}^N(\xv_k -  \theta_\xv)=\widetilde{\xv} - \theta_\xv$ (and symmetrically for $\yv$), the cross term simplify, yielding:
\[
\widetilde{\Cv}_{\xv\yv} - \widehat{\Cv}_{\xv\yv} = -\pa{\widetilde{\xv} - \theta_\xv}\pa{\widetilde{\yv} - \theta_\yv}\tp.
\]
Taking the operator norm on both sides and utilizing that the norm of a rank-1 matrix $\av\bv\tp$ is $\norm{\av}\norm{\bv}$ we obtain
\[
\norm{\widetilde{\Cv}_{\xv\yv} - \widehat{\Cv}_{\xv\yv}} = \norm{\widetilde{\xv} - \theta_\xv}\norm{\widetilde{\yv} - \theta_\yv}.
\]
To bound the norm of these mean errors we consider the expected square norm $\Expect{}{\norm{\widetilde{\xv} - \theta_\xv}^2}$. Because the samples are independent and centered around $\theta_\xv$, the cross-term in the expansion of the square norm have an expectation of zero. Thus this expectation can be written:
\[
\Expect{}{\norm{\widetilde{\xv} - \theta_\xv}^2} = \frac{1}{N^2}\sum_{k=1}^N \Expect{}{\norm{\xv_k - \theta_\xv}^2} \leq \frac{\rho_\xv}{N}.
\]
By Markov's inequality we have
\[
\mathbb{P}\pa{\norm{\widetilde{\xv} - \theta_\xv}^2 \geq t^2} \leq \frac{\Expect{}{\norm{\widetilde{\xv} - \theta_\xv}^2}}{t^2} \leq \frac{\rho_\xv}{Nt^2}.
\]
By choosing $t=\sqrt{\frac{\rho_\xv}{Nd}}$, we have that with probability at least $1-d$:
\[
\norm{\widetilde{\xv} - \theta_\xv} \leq \sqrt{\frac{\rho_\xv}{Nd}}.
\]
Thus by the same logic we have that $\norm{\widetilde{\yv} - \theta_\yv} \leq \sqrt{\frac{\rho_\yv}{Nd}}$. Using a union bound we have that with probability at least $1 - 2d$:
\[
\norm{\widetilde{\Cv}_{\xv\yv} - \widehat{\Cv}_{\xv\yv}} \leq \frac{\sqrt{\rho_\xv\rho_\yv}}{Nd},
\]
which proves the claim of the lemma.
\end{proof}

\end{document}